\documentclass[twoside,11pt]{article}

\usepackage[preprint]{jmlr2e}

\usepackage{amsmath}
\usepackage{amsfonts}
\usepackage{mathtools}
\usepackage{bbm,bm}
\usepackage{booktabs,multirow}
\usepackage{makecell}
\usepackage{enumitem}
\usepackage{subcaption}
\usepackage{float}
\usepackage[table,dvipsnames]{xcolor}
\usepackage{microtype}
\usepackage[capitalize,noabbrev]{cleveref}

\DeclareMathOperator{\Tr}{Tr}

\DeclareMathOperator{\softmax}{softmax}
\DeclareMathOperator{\vecop}{vec}
\DeclareMathOperator{\End}{End}
\DeclareMathOperator{\GL}{GL}
\DeclareMathOperator{\blkdiag}{blkdiag}
\newcommand{\R}{\mathbb{R}}

\newcommand{\norm}[1]{\left\lVert #1 \right\rVert}

\newcommand{\1}{\mathbbm{1}}

\newcommand{\kron}{\otimes}

\newcommand{\Aeff}{A_{\mathrm{eff}}}
\newcommand{\Oeff}{O^{\mathrm{eff}}}
\newcommand{\Leff}{L_{\text{mix}}}
\newcommand{\Lcrw}{L_c^{\mathrm{rw}}}
\newcommand{\TSHA}{T_{\text{SHA}}}
\newcommand{\TMHA}{T_{\text{MHA}}}
\newcommand{\ts}{\text{vec}^r}
\newcommand{\Mh}{M_h}
\newcommand{\Ah}{A^{(h)}}
\newcommand{\LN}{\mathrm{LN}}
\newcommand{\FFN}{\mathrm{FFN}}
\newcommand{\mask}{\mathcal{M}}

\definecolor{bg-gray}{gray}{0.92}

\graphicspath{{content/}}
\newenvironment{xfigure}[1][tbp]{\begin{figure}[#1]}{\end{figure}}
\newcommand{\tabfit}[1]{#1}

\hypersetup{hidelinks} 

\ShortHeadings{From Self-Attention to Connection Laplacian}{Lin, Chen, Li, Wang, Ye and He}
\firstpageno{1}

\begin{document}

\title{From Self-Attention to Connection Laplacian:\\
  A Unified Operator View of Transformers}

\author{%
  \name Binbin Lin \email binbinlin@zju.edu.cn \\
  \addr School of Software Technology, Zhejiang University, China
  \AND
  \name Wei Chen \email weichen.cw@zju.edu.cn \\
  \addr College of Computer Science and Technology, Zhejiang University, China
  \AND
  \name Yalun Li \email yalunli@zju.edu.cn \\
  \addr College of Computer Science and Technology, Zhejiang University, China
  \AND
  \name Wenxiao Wang \email wenxiaowang@zju.edu.cn \\
  \addr School of Software Technology, Zhejiang University, China
  \AND
  \name Jieping Ye \email yejieping.ye@alibaba-inc.com \\
  \addr Alibaba Cloud, China
  \AND
  \name Xiaofei He \email xiaofeihe@cad.zju.edu.cn \\
  \addr College of Computer Science and Technology, Zhejiang University, China%
}

\maketitle

\begin{abstract}
Self-attention is a ubiquitous primitive in modern sequence models, yet its operator-level geometry is only partially understood.
We view a token sequence as a \emph{vector field over the token-position graph} and identify attention as a \emph{connection walk}: messages are aggregated by a nonnegative walk matrix while being \emph{transported} along each edge by a learned linear map.
Within this framework, we prove that \emph{single-head attention (SHA)} is exactly a connection propagation step with \emph{constant} transport, and that \emph{multi-head attention (MHA)} is exactly a \emph{single edge-dependent connection walk} whose effective transport is an attention-gated mixture of headwise transports.
We further clarify the conditions under which the corresponding generator reduces to a \emph{random-walk connection Laplacian}, highlighting the roles of stochasticity, reversibility, and metric-compatible transports.
Empirically, we find that trained Transformers across scales (from 124M to 8B) and structures (encoder/decoder) exhibit geometric structure consistent with our theory: effective attention graphs converge to stable geometric operators in deeper layers, learned transports self-organize into approximate scaled isometries, and both phenomena strengthen consistently with scale.
Overall, the paper provides a precise connection-walk formalism that links self-attention to classical geometric operators, along with a set of operator-level tools for analyzing transformer models from a geometric perspective.
\end{abstract}


\begin{keywords}
  connection Laplacian, self-attention, multi-head attention, transformers, geometric deep learning
\end{keywords}

\section{Introduction}
\label{sec:intro}
Transformers and their self-attention mechanism~\citep{vaswani2017attention} have reshaped modern deep learning by enabling models to capture long-range dependencies across tokens.
In self-attention, each token attends to other tokens through a learned similarity measure, producing a weighted aggregation of value vectors.
The resulting attention matrix can be interpreted as a directed weighted graph where edges encode information flow.
This paper gives an operator-level geometric interpretation of self-attention: token representations form a vector field over the token-position graph (the token graph, for short), and attention acts as a connection walk that mixes tokens while transporting features along edges.
Developing this operator view, we \textit{clarify the operator structure underlying multi-head attention (MHA), the depthwise behavior, and the geometry of token interactions}.

\subsection{Related work}

Existing literature examines attention through various theoretical lenses, which we categorize into graph, tensor, dynamical, energy, kernel, and geometric perspectives.

\textbf{Graph operator view.}
Self-attention can be viewed as a data-dependent linear operator that mixes token features via the attention matrix.
This formulation aligns with graph transformers, where full attention implies all-to-all communication and sparse attention recovers local aggregation~\citep{yun2019gtn}.
Analytic approaches often symmetrize or normalize the attention-induced graph and interpret updates in terms of graph Laplacian smoothing, spectral bias, or oversmoothing~\citep{kreuzer2021rethinking, wu2024maskln}.
These works primarily analyze the scalar mixing graph; in contrast, our operator view also incorporates the value-output maps that transport feature vectors between token fibers.

\textbf{Tensor operator view.}
A parallel line of research treats attention blocks as structured tensor operators, aiming to preserve multi-way correlations or enhance efficiency.
Representative approaches include extending attention to tensor inputs via matricization to couple information across multiple modes~\citep{babiloni2020tesa}, and employing tensor decompositions to reduce parameter complexity~\citep{ma2019tensorized}.
Related work further explores higher-order or multilinear attention variants that alter the algebraic formulation of similarity and aggregation~\citep{tpa2025}.
This line is complementary to ours: tensor methods change or compress the algebra of attention, whereas we identify the connection-walk structure already present in standard MHA.

\textbf{PDE and dynamical-systems view.}
Several works study attention by taking continuous limits, interpreting layerwise updates as discretizations of differential equations.
Continuous-depth Transformer variants make this explicit by treating block parameters as functions of depth and analyzing the resulting non-autonomous neural ODE~\citep{tong2025neuralode}.
Related formulations view the Transformer as a forward-Euler discretization and add trajectory-level regularization to stabilize the evolution~\citep{kan2025ottransformer}.
Our connection-walk formulation supplies an operator-level decomposition whose reversible, directed, and local deformation components naturally align with advection--diffusion--reaction (ADR) effects.

\textbf{Energy-functional view.}
Attention also relates to energy-based updates and associative memory dynamics.
Modern Hopfield network formulations derive attention-like retrieval from an energy function with fixed points corresponding to stored patterns, connecting attention to iterative minimization and memory retrieval~\citep{ramsauer2020hopfield,farooq2025nonlinearhopfield}.
This view clarifies stability and retrieval capacity, and motivates nonlinear variants of attention.
This perspective targets the fixed-point dynamics of retrieval, whereas our operator view characterizes the connection-walk transport enacted by a single attention step.

\textbf{Kernel and approximation view.}
Dot-product attention is also interpreted as a data-dependent kernel operator.
Performer-style methods approximate softmax attention with positive random features to achieve linear-time complexity~\citep{choromanski2020performer}, and surveys systematize a broader family of efficient variants~\citep{tay2023efficient}.
Theoretical analyses characterize attention as learning non-Mercer kernels on Banach spaces, providing representer-style results and universal approximation~\citep{wright2021nonmercer}.
Complementary work also studies learning explicit Transformer kernels~\citep{chowdhury2022transformerkernel}.
These works primarily focus on the scalar affinity kernel or its approximation, whereas our formulation separates walk weights from edgewise feature transports.

\textbf{Geometric and gauge-equivariant view.}
In geometric deep learning, \emph{transport} and \emph{connections} appear explicitly when comparing features across local frames.
Gauge-equivariant networks formalize how local frame changes act on features and constrain valid operators~\citep{cohen2019gauge, bronstein2021gdl}.
On the spectral-geometry side, vector diffusion maps and the graph connection Laplacian provide a principled framework for diffusion on vector fields with transports~\citep{singer2012vdm,bandeira2013cheeger}.
Sheaf neural networks and neural sheaf diffusion equip graphs with vector spaces and linear maps and then define diffusion through sheaf or connection Laplacians~\citep{bodnar2022neuralsheafdiffusion,barbero2022connectionlaplacians}.
These works construct graph neural architectures from prescribed or learned sheaves, while we show that ordinary MHA already induces a connection-valued walk whose effective transport is an attention-gated mixture of headwise value-output maps.

Taken together, these perspectives provide valuable interpretations of attention; yet, to the best of our knowledge, prior work does not explicitly identify standard multi-head attention as a \emph{single edge-dependent connection-walk operator} on a vector field over the token graph.
This requires jointly analyzing the query-key walk weights and the value-output transports, rather than only the attention probabilities or individual heads.
Motivated by this gap, we investigate the precise relationship between self-attention and connection walks and use it to interpret multi-head transport, depthwise behavior, and operator-level diagnostics.
Concretely, we prove that single-head attention (SHA) and MHA are each exactly a connection-walk step (\S\ref{sec:sa-cp}) and identify when the induced generator reduces to a random-walk connection Laplacian (\S\ref{sec:sa-to-cl}); we then turn these operators into diagnostics and report their behavior across trained Transformers from 124M to 8B parameters (\S\ref{sec:experiments}).


\section{Preliminaries: Notation and Attention Operators}
\label{sec:pre}

\textbf{Notation.}
Let $X\in\R^{n\times d}$ collect $n$ token vectors (rows) in $\R^d$, i.e., $X_i\in\R^{1\times d}$ is the feature row at position $i$.
We use row-consistent stacking to switch between matrix form and a single long vector. Define \emph{row-stacking operator} $\ts$ as
\begin{equation}
\ts(X)\;\stackrel{\mathrm{def}}{=}\;\begin{bmatrix}X_1^\top\\ \vdots\\ X_n^\top\end{bmatrix}\in\R^{nd}.
\label{eq:ts-def}
\end{equation}
Equivalently, $\ts(X)=\vecop(X^\top)$ where $\vecop(\cdot)$ denotes the standard column-stacking vectorization.
We use this convention for convenience because right-multiplication of row vectors becomes left-multiplication after stacking.

For $A\in\R^{n\times n}$ (mixing across token positions) and $M\in\R^{d\times d}$ (mixing across channels),
\begin{equation}
\ts(AXM) \;=\; (A\kron M^\top)\,\ts(X).
\label{eq:ts-kronecker}
\end{equation}
Since $\ts(X)=\vecop(X^\top)$ and $(AXM)^\top=M^\top X^\top A^\top$, the identity follows from
$\vecop(UBV)=(V^\top\kron U)\vecop(B)$.

\textbf{Single-head attention (SHA).}
Let $W_Q\in \R^{d\times d_q},W_K\in \R^{d\times d_q},W_V\in \R^{d\times d_v}$ and $W_O\in\R^{d_v\times d}$. Define
$Q=XW_Q, K=XW_K,V=XW_V,$
and the attention (row-wise) probabilities $A=\softmax\!\big((QK^\top)/\tau\big)\in\R^{n\times n},$
where $\softmax$ is applied \emph{row-wise} (optionally after adding a mask), and $\tau>0$ is the temperature.
With row-wise $\softmax$, each row of $A$ sums to one on its support, so $A$ is row-stochastic in the unmasked
(full-support) case and row-stochastic on the admissible set under masking.
The SHA head output is $Y=A\,V\,W_O \;=\; A\,X\,(W_V W_O)$.

\textbf{Definition (Transport Matrix).}
We define the transport matrix as the composite linear map:
\[
M \;\stackrel{\mathrm{def}}{=}\; W_V\,W_O \in \R^{d\times d}.
\]
Attention performs \emph{message passing} on the token graph: token $i$ aggregates messages from tokens $j$ weighted by $A_{ij}$.
We term $M$ the \emph{transport matrix} as it is the per-head channel map transporting each token's features into message space, which is then mixed across positions by $A$.
Writing the output row-wise as $Y_i = \sum_{j=1}^n A_{ij}\, X_j\,M$ reveals that $M$ acts as \textit{a shared linear transport} across all edges within a head.

\begin{remark}
The term ``transport'' highlights the functional role of $M$ as a message transformation and does not imply orthogonality or invertibility. In MHA, different heads yield multiple transported message types mixed in parallel.
\end{remark}

In operator form, SHA is a composition of \emph{position mixing} by $A$, followed by a shared \emph{channel transport} by $M$:
\[
\ts(Y)=\TSHA\,\ts(X),
\;
\TSHA = A\kron M^\top.
\]

\textbf{Multi-head attention (MHA).}
MHA extends this formulation to $H$ parallel heads.
For each head $h \in \{1, \dots, H\}$, let $A^{(h)}$ be the attention matrix induced by query-key pairs $(W_Q^{(h)}, W_K^{(h)})$.
Let the output projection $W_O \in \mathbb{R}^{H d_v \times d}$ be partitioned row-wise into blocks $W_O^{(h)} \in \mathbb{R}^{d_v \times d}$.
The MHA output is the sum of headwise transformations:
\begin{equation}
    Y \;=\; \sum_{h=1}^H A^{(h)} X \underbrace{\big(W_V^{(h)} W_O^{(h)}\big)}_{M_h}
    \label{eq:mha-head-sum}
\end{equation}
where $M_h \in \mathbb{R}^{d \times d}$ is the transport matrix for head $h$.
Applying the row-stacking operator yields the \emph{sum of Kronecker products} structure:
\begin{equation}
    \ts(Y) \;=\; \TMHA \, \ts(X), \ \TMHA \;=\; \sum_{h=1}^H A^{(h)} \otimes M_h^\top.
    \label{eq:mha-sum}
\end{equation}


\section{Connection Walk and Connection Laplacian on Token Graphs}
\label{sec:connlap}

We model token positions as vertices $V=\{1,\dots,n\}$, and admissible interactions by a directed graph $G=(V,E)$.
Full attention corresponds to the complete directed graph, and sparse attention corresponds to a pruned neighborhood.
The matrix entry $A_{ij}$ is read as the nonnegative weight used when updating node $i$ using information from node $j$; row $i$ collects incoming contributions from sources $j$.

A \emph{walk weight} is a nonnegative matrix $A\in\R^{n\times n}$ with support consistent with $E$, i.e., $A_{ij}>0\Rightarrow (j\to i)\in E$.
In random-walk, $A$ is row-stochastic: $\sum_{j=1}^n A_{ij}=1$, so $A_{ij}$ is a probability distribution over sources $j$ for each $i$.

\subsection{Vector field over the position graph.}
A discrete $d$-dimensional vector field on $G$ assigns a feature vector to each vertex: $X\in\R^{n\times d}$, with row vector $X_i\in\R^{1\times d}$ attached to position $i$.
To compare or aggregate vectors located at different vertices, we equip each oriented edge $(j\to i)$ with a linear map
$O_{ij}\in \End(\R^d)\cong\R^{d\times d}$ called an \emph{edge transport} from node $j$ to node $i$.
With our row-vector convention, $O_{ij}$ acts by right multiplication: $X_j\mapsto X_jO_{ij}$.
When such maps are invertible, orthogonal, or approximately orthogonal, the operator approaches the usual setting of a metric connection; however, invertibility is not required for exact attention identification below.

\subsection{Connection propagation operator}
Given $(A,O)$ and a vector field $X$, the connection propagation operator $\mathcal{T}(A,O)$ defines one step of message passing:
\begin{equation}
(\mathcal{T}(A,O)X)_i \;\stackrel{\mathrm{def}}{=}\; \sum_{j=1}^n A_{ij}\, X_j\, O_{ij}.
\label{eq:conn-prop}
\end{equation}
When $O_{ij}\equiv I_d$, this reduces to ordinary random-walk averaging: $(\mathcal{T}(A,I)X)_i=\sum_j A_{ij}X_j$.

\subsection{Random-walk connection Laplacian}
The random-walk connection Laplacian measures discrepancy between $X_i$ and transported average from neighbors:
\begin{equation}
\begin{aligned}
(\mathcal{L}_{c}^{\mathrm{rw}}(A,O)X)_i
&\;\stackrel{\mathrm{def}}{=}\;
X_i - \sum_{j=1}^n A_{ij}\,X_j\,O_{ij} \\
&\;=\; (I-\mathcal{T}(A,O))X .
\end{aligned}
\label{eq:conn-lap-rw}
\end{equation}
If $A$ is row-stochastic, then any parallel section satisfying $X_i=X_jO_{ij}$ for all $j$ with $A_{ij}>0$ lies in $\ker(\mathcal{L}_{c}^{\mathrm{rw}}(A,O))$,
\[
\sum_j A_{ij}X_jO_{ij}=\sum_j A_{ij}X_i = X_i\sum_j A_{ij}=X_i.
\]

\subsection{Block operator form (token stacking)}
Define block matrix $T(A,O)\in\R^{nd\times nd}$ with $d\times d$ blocks
\begin{equation}
T(A,O)_{ij} \;\stackrel{\mathrm{def}}{=}\; A_{ij}\,O_{ij}^{\top}.
\label{eq:block-T}
\end{equation}
Due to the row-stacking convention $\ts(X) = \vecop(X^\top)$, the transport maps appear as transposes in the blocks:
\[
(\mathcal{T}(A,O)X)_i^\top=\sum_j A_{ij}O_{ij}^\top X_j^\top .
\]
Consequently,
\[
\ts(\mathcal{T}(A,O)X) \;=\; T(A,O)\,\ts(X) .
\]


\section{Self-Attention as Connection Propagation}
\label{sec:sa-cp}

Let $G$ be the directed token graph on $V=\{1,\dots,n\}$.
We use the term \emph{connection walk} for connection propagation with edge transports $O_{ij}\in\End(\R^d)$.
Note that we do not assume $O_{ij}\in\GL(d)$; standard attention transports can be low-rank or singular because the value-output pathway may factor through a lower-dimensional head space.

\subsection{SHA as connection walk propagation}

\begin{theorem}\label{thm:sha}
Assume $A$ is row-stochastic and the edge transport is constant, $O_{ij}\equiv M$ for all active edges $(j\to i)$.
Then the single-head attention (SHA) $Y=AXM$ is exactly the connection propagation step
\[
  Y \;=\; \mathcal{T}(A,O)X .
\]
Equivalently, in stacked form, $\TSHA=T(A,O)=A\otimes M^\top$.
\end{theorem}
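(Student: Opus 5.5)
The proof is a direct unfolding of definitions, carried out first row by row and then in stacked form. I will fix an arbitrary position $i$ and expand the SHA output row. Since $Y=AXM$, the $i$-th row is
\[
Y_i=\sum_{j=1}^n A_{ij}\,X_j\,M .
\]
On active edges $(j\to i)$ we have $O_{ij}=M$ by hypothesis. On inactive edges $A_{ij}=0$, because the support of $A$ is consistent with $E$, so the value assigned to $O_{ij}$ there does not matter. To make $O$ a total assignment, I will simply set $O_{ij}=M$ for all pairs. Then each summand $A_{ij}X_jM$ equals $A_{ij}X_jO_{ij}$, and comparing with the definition \eqref{eq:conn-prop} gives $Y_i=(\mathcal{T}(A,O)X)_i$ for every $i$. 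Hence $Y=\mathcal{T}(A,O)X$.

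For the stacked form, I will use the block description \eqref{eq:block-T}. The $(i,j)$ block of $T(A,O)$ is $A_{ij}O_{ij}^\top=A_{ij}M^\top$, which is exactly the $(i,j)$ block of the Kronecker product $A\otimes M^\top$. This gives $T(A,O)=A\otimes M^\top$. The identity $\ts(\mathcal{T}(A,O)X)=T(A,O)\ts(X)$ established in the block-operator subsection, together with \eqref{eq:ts-kronecker} applied to $AXM$, then shows that both sides represent the same map $\ts(X)\mapsto\ts(Y)$. This is consistent with the definition $\TSHA=A\otimes M^\top$.

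There is no real obstacle here. The only points needing care are the following:
\begin{itemize}
\item the treatment of non-edges, where $O_{ij}$ is undefined but multiplied by $A_{ij}=0$;
\item the transpose convention in the blocks, which must match the row-stacking $\ts(X)=\vecop(X^\top)$.
\end{itemize}
Row-stochasticity of $A$ is not actually needed for the algebraic identity. I will note that it only serves to make $\mathcal{T}$ a genuine random-walk averaging, which places the result in the random-walk setting used later for the Laplacian.
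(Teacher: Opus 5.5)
Your proof is correct and follows the paper's argument. You expand $Y_i=\sum_j A_{ij}X_jM$, match it to Eq.~\eqref{eq:conn-prop} with $O_{ij}\equiv M$, and read off the blocks $A_{ij}M^\top$ of $A\otimes M^\top$ from Eq.~\eqref{eq:block-T}. Your added remarks are accurate refinements the paper leaves implicit: non-edges carry $A_{ij}=0$, and row-stochasticity is not needed for the algebraic identity.
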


\begin{proof}
For every target token $i$,
\[
  Y_i \;=\; \sum_j A_{ij}X_jM \;=\; \sum_j A_{ij}X_jO_{ij},
\]
which is Eq.~\eqref{eq:conn-prop} with $O_{ij}\equiv M$.
The Kronecker representation follows from Eq.~\eqref{eq:block-T}.
\end{proof}

\subsection{MHA as edge-dependent connection walk}
\label{sec:mha}

We next show that multi-head attention (MHA) is exactly a single connection propagation step with an edge-dependent transport.
Recall from Eq.~\eqref{eq:mha-sum} that
\[
\ts(Y) \;=\; \TMHA\,\ts(X),
\;
\TMHA=\sum_{h=1}^H \big(\Ah\kron \Mh^\top\big).
\]
Define the \emph{effective walk weights} (mean attention) by
\begin{equation}
\Aeff(i,j) \;\stackrel{\mathrm{def}}{=}\; \frac{1}{H}\sum_{h=1}^H \Ah_{ij}\;\ge 0.
\label{eq:Aeff-def}
\end{equation}
When $\Aeff(i,j)>0$, define the \emph{effective edge transport} by the $\Aeff$-weighted average
\begin{equation}
\Oeff_{ij}
\;\stackrel{\mathrm{def}}{=}\;
\frac{ \frac{1}{H} \sum_{h=1}^H \Ah_{ij}\,\Mh}{\Aeff(i,j)}
\;\in\R^{d\times d}.
\label{eq:Oeff-def}
\end{equation}
If $\Aeff(i,j)=0$, $\Oeff_{ij}$ can be chosen arbitrarily, for instance as $I_d$, because the edge carries zero weight.

\begin{theorem}[Exact reduction to a scaled edge-dependent connection step]
\label{thm:mha}
Let $\TMHA$ be given by Eq.~\eqref{eq:mha-sum}. Let $\Aeff$ and $\Oeff$ be defined by
Eqs.~\eqref{eq:Aeff-def}--\eqref{eq:Oeff-def}. Then, with $T(\cdot,\cdot)$ defined in Eq.~\eqref{eq:block-T},
\begin{equation}
\TMHA \;=\; H \cdot T(\Aeff,\Oeff).
\label{eq:TMHA-connection}
\end{equation}
Equivalently, the MHA output satisfies
\begin{equation}
\begin{aligned}
Y_i \;&=\; H \sum_{j=1}^n \Aeff(i,j)\,X_j\,\Oeff_{ij} \\
\;&=\; H\,(\mathcal{T}(\Aeff,\Oeff)X)_i .
\end{aligned}
\label{eq:mha-connection-row}
\end{equation}
\end{theorem}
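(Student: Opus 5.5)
The plan is a blockwise comparison. Both sides of Eq.~\eqref{eq:TMHA-connection} are $nd\times nd$ matrices partitioned into $d\times d$ blocks indexed by $(i,j)$, so it suffices to show that every $(i,j)$ block agrees.

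First we read off the blocks of $\TMHA$. By the definition of the Kronecker product, the $(i,j)$ block of $\Ah\kron\Mh^\top$ is $\Ah_{ij}\Mh^\top$, so Eq.~\eqref{eq:mha-sum} gives
\[
(\TMHA)_{ij}=\sum_{h=1}^H \Ah_{ij}\Mh^\top .
\]
By Eq.~\eqref{eq:block-T}, the corresponding block of the right-hand side is $H\,\Aeff(i,j)\,(\Oeff_{ij})^\top$.

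We then split into two cases according to whether $\Aeff(i,j)$ vanishes.
\begin{itemize}
\item If $\Aeff(i,j)>0$, substitute Eq.~\eqref{eq:Oeff-def} and use linearity of the transpose. The denominator $\Aeff(i,j)$ cancels against the prefactor, and the factor $\frac1H$ cancels against $H$, which leaves exactly $\sum_h \Ah_{ij}\Mh^\top$.
\item If $\Aeff(i,j)=0$, the right-hand block is $0$ whatever $\Oeff_{ij}$ is. Each $\Ah_{ij}\ge 0$, because softmax outputs, masked or not, are nonnegative. A sum of nonnegative terms vanishes only if every term does, so every $\Ah_{ij}=0$ and the left-hand block is also $0$.
\end{itemize}
This case split is the only point that needs care: it is where nonnegativity of the attention weights enters, and it is what justifies the arbitrary choice of $\Oeff_{ij}$ on zero-weight edges.

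For the row form Eq.~\eqref{eq:mha-connection-row}, we apply the established identity $\ts(\mathcal{T}(A,O)X)=T(A,O)\ts(X)$ to the operator identity just proved. This gives $\ts(Y)=H\,\ts(\mathcal{T}(\Aeff,\Oeff)X)$, and since $\ts$ is injective, $Y_i=H(\mathcal{T}(\Aeff,\Oeff)X)_i$. Alternatively, the row form can be checked directly from Eq.~\eqref{eq:mha-head-sum}: exchange the sums to get $Y_i=\sum_j X_j\big(\sum_h \Ah_{ij}\Mh\big)$, and then use the same two-case argument.
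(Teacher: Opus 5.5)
Your proposal is correct and follows the paper's proof: compare the $(i,j)$ blocks, split on whether $\Aeff(i,j)>0$ or $\Aeff(i,j)=0$, and use nonnegativity of the head weights in the zero case. Your explicit derivation of the row form, via $\ts(\mathcal{T}(A,O)X)=T(A,O)\,\ts(X)$ and injectivity of $\ts$, is a small addition that the paper leaves implicit in the word ``Equivalently''.
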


\begin{proof}
We verify the equality block-wise. By Eq.~\eqref{eq:mha-sum}, the $(i,j)$ block of $\TMHA$ equals $\sum_{h=1}^H \Ah_{ij}\Mh^\top$.

\textbf{Case 1: active edge ($\Aeff(i,j)>0$).}
Taking the transpose of Eq.~\eqref{eq:Oeff-def},
\[
{\Oeff_{ij}}^\top
= \frac{\frac{1}{H}\sum_h \Ah_{ij}\Mh^\top}{\Aeff(i,j)} .
\]
Multiplying by $H\Aeff(i,j)$ yields
\[
H\Aeff(i,j){\Oeff_{ij}}^\top
= \sum_{h=1}^H \Ah_{ij}\Mh^\top,
\]
which matches the MHA block.

\textbf{Case 2: inactive edge ($\Aeff(i,j)=0$).}
Since $\Ah_{ij}\ge 0$, the zero average implies $\Ah_{ij}=0$ for all heads $h$.
The MHA block is therefore zero, and the connection block $H\Aeff(i,j){\Oeff_{ij}}^\top$ is also zero.

Thus, every block matches, proving Eq.~\eqref{eq:TMHA-connection}.
\end{proof}

\begin{remark}[Scale convention]
The factor $H$ is a bookkeeping consequence of defining $\Aeff$ as the average attention over heads.
Equivalently, one may absorb this scalar into the effective transports.
Our geometric diagnostics focus on normalized walk weights and relative transport geometry.
\end{remark}


\section{From Connection Walks to Connection Laplacians}
\label{sec:sa-to-cl}

Theorem~\ref{thm:mha} identifies MHA exactly as a scaled connection walk with edge-dependent transports.
We now discuss several spectral and geometric properties of induced operator, and relate them to classical metric connection Laplacian.

\subsection{Effective attention weights}

\textbf{Row-stochasticity.}
Each head weight matrix $\Ah$ is produced by a row-wise softmax, with masking restricting support, hence $\sum_j \Ah_{ij}=1$ on its admissible set.
Averaging preserves row sums, so $\sum_j\Aeff(i,j)=1$.
Consequently, $\Aeff \1=\1$ and the spectral radius of $\Aeff$ is $1$ for the finite-state row-stochastic walk.
Increasing $\|\Aeff\|_F$ should therefore be interpreted as increasing the concentration of the walk.
Singular-value amplification can occur for non-normal or column-concentrated directed walks.

\textbf{Asymmetry and reversibility.}
In general, $\Aeff$ is not symmetric: causal masking makes the graph directed by construction, and even in bidirectional encoders, the learned weights need not satisfy $\Aeff(i,j)=\Aeff(j,i)$.
A useful structural notion is reversibility: if $\pi$ is a stationary distribution of $\Aeff$, then $\Aeff$ is reversible iff
\begin{equation}
  \pi_i \Aeff(i,j) \;=\; \pi_j \Aeff(j,i)\quad \forall\, i,j .
  \label{eq:reversible}
\end{equation}
Reversibility is the discrete condition that restores self-adjoint spectral theory.
Without it, the walk is a \textit{directed or non-reversible diffusion}, and the generator is generally non-normal, consistent with directed Laplacian frameworks~\citep{chung2005directed,veerman2020primer}.

\subsection{Effective transport}

\textbf{Orthogonality.}
The classical metric graph connection Laplacian assumes transports in $O(d)$ or $SO(d)$ so that $O_{ij}$ preserves inner products and lengths~\citep{singer2012vdm,bandeira2013cheeger}.
In our identification, the analogous sufficient condition is ${\Oeff_{ij}}^\top\Oeff_{ij}\approx \mu_{ij} I_d$ on active edges, with $\mu_{ij}>0$ allowing a scaled-isometric diagnostic.
Since $\Oeff_{ij}$ is a linear combination of the head transports $\Mh$, it is not necessarily invertible or orthogonal in general.
This makes the empirical emergence of approximate scaled-isometry in Section~\ref{sec:experiments} nontrivial.

\textbf{Metric deformation and separability.}
When $\Oeff_{ij}$ is not orthogonal, propagation can alter norms and angles, inducing metric deformation.
If $\Oeff_{ij}$ is nonsingular, the polar decomposition gives
\[
  \Oeff_{ij} \;=\; R_{ij}\,S_{ij}, \quad R_{ij}\in O(d),\quad S_{ij}\succeq 0 .
\]
$R_{ij}$ captures \emph{length-preserving transport} (pure connection diffusion), while $S_{ij}$ captures
\emph{anisotropic scaling and shearing} (metric deformation).
For singular transports, the same decomposition uses a partial isometry.
In practice, the extent to which deformation is realized inside attention is an empirical question measured by the diagnostics in Section~\ref{sec:experiments}.

\subsection{When does the attention connection walk become a connection Laplacian?}
\label{subsec:when-laplacian}

Let the normalized MHA propagator in block form be $\bar T_{\mathrm{MHA}}=\TMHA/H=T(\Aeff,\Oeff)$, and define generator
\begin{equation}
  \Leff \;=\; I - \bar T_{\mathrm{MHA}},
\end{equation}
which is a well-defined generator in the form $I-\text{(one-step propagator)}$.
However, without reversibility and metric-compatible transports, $\Leff$ is generally not self-adjoint and $\langle X,\Leff X\rangle_{\pi}$ need not be nonnegative.
It's thus better understood as a directed connection-walk generator rather than a classical metric connection Laplacian.
Under a weighted inner product $\langle X, Y\rangle_{\pi}=\sum_i\pi_i\langle X_i, Y_i\rangle$, one may decompose the generator into symmetric and skew components: the symmetric part governs dissipation, while the skew part captures directed drift, advection-like behavior, and transient growth typical of non-reversible operators.

A central benefit of a metric connection Laplacian is an associated nonnegative Dirichlet form.
If (i) the walk is reversible as in Eq.~\eqref{eq:reversible} and (ii) transports are metric-compatible with inverse consistency, e.g., $\Oeff_{ji}=(\Oeff_{ij})^{-1}$ on bidirectional edges, then $\mathcal{T}(\Aeff,\Oeff)$ is self-adjoint in the corresponding geometry and the generator becomes PSD.
In that regime,
\begin{equation}
  \mathcal{E}_{\pi}(X)
  \;\stackrel{\mathrm{def}}{=}\;
  \tfrac12\sum_{i,j}\pi_i \Aeff(i,j)\,\|X_i - X_j\Oeff_{ij}\|_2^2
  \label{eq:energy-pi}
\end{equation}
is the standard connection Dirichlet energy and vanishes exactly on parallel vector fields~\citep{lin2013parallel}.
For generic directed and non-isometric attention, we use analogous energy quantities only as diagnostics, not as PSD quadratic forms of a self-adjoint Laplacian.


\section{Operator-Level Interpretation, Dynamics and Diagnostics}
\label{sec:insights}

Identifying (multi-head) self-attention as a \emph{connection walk} on a token graph transforms it into an explicit operator family on fields, enabling geometric \emph{design levers} (parameterizations, constraints, regularizers) and \emph{diagnostics} (energy, curvature and holonomy, metric-compatibility) that are hard to motivate from ``similarity + aggregation'' alone.

\subsection{Attention-level vs. block-level interpretation}
\label{subsec:block-scope}

The exact connection-walk identification is an attention-sublayer statement.
A full Transformer block additionally contains residual connections, normalization layers, and pointwise nonlinear FFNs.
For a pre-norm block, a common schematic form is
\[
  U = X + \mathrm{Attn}(\LN(X)),
  \;
  Y = U + \FFN(\LN(U)).
\]
These components are essential for optimization and representation, but have a different operator order on the token graph.
LayerNorm and FFN act independently at each token within a layer, so their Jacobians are block diagonal across token positions.
They do not introduce same-layer cross-token edges as attention.
Nevertheless, they reshape local fiber coordinates and therefore influence the Q/K/V maps and the realized connection geometry in subsequent layers.

For example, for a row vector $x_i$, LayerNorm has the form
\[
\LN(x_i)=\gamma\odot
\frac{x_i-\mu_i\1^\top}{\sqrt{\sigma_i^2+\epsilon}}+\beta,
\;
\mu_i=\tfrac1d x_i\1,
\]
with variance computed across channels.
Thus its Jacobian over a sequence is $\blkdiag(J^{\mathrm{LN}}_1,\dots,J^{\mathrm{LN}}_n)$.
Similarly, for an FFN $f(x)=\phi(xW_1+b_1)W_2+b_2$ in row-vector notation, the local linearization around a layer state has Jacobian
\[
  \begin{aligned}
  \mathcal{J}_{f}(X)&=\blkdiag(J_1,\dots,J_n),\\
  J_i&=W_1D_{\phi'(x_iW_1+b_1)}W_2 .
  \end{aligned}
\]
This is the discrete analog of a zero-order reaction term: it transforms the fiber at each node without adding new edges.

\subsection{Design and diagnostics}
\label{subsec:design-diagnostics}

A connection-walk attention layer specifies two coupled objects: (i) a walk $\Aeff$ on the token graph, and (ii) an edge transport $\Oeff_{ij}$ that maps features between fibers.
MHA provides an efficient parameterization of edge-dependent transports: heads form a low-dimensional \emph{transport dictionary}, while attention performs \emph{edgewise dictionary selection}.
Increasing the number of heads enlarges the admissible family of edge transports without learning a full per-edge matrix.

The operator view also reveals that MHA is an implementation of a connection walk, and the design problem becomes: \emph{how should one parameterize a stable yet expressive family of mixing operators and edge transports?}

This opens a broader implementation space. For example, spectral and functional-calculus constructions, kernelized parameterizations that control locality and approximate global mixing, and structured or constraint-preserving transports (near-isometries, inverse consistency) that bias the operator toward a metric connection Laplacian regime. More generally, the ``dictionary'' perspective extends beyond heads: one may mix over any low-dimensional operator basis (polynomial/spectral modes, wavelet-like primitives, learned linear dynamical modes), trading per-edge flexibility for parameter efficiency, stability, and interpretability.

For diagnostics, the operator view suggests layerwise or headwise measurements: metric-compatibility scores such as $\|{\Oeff_{ij}}^\top\Oeff_{ij}-\mu_{ij}I\|$; reversibility and symmetrizability tests for $\Aeff$; inverse-consistency tests for $\Oeff$; and curvature or holonomy proxies via loop-consistency of transports along short cycles.
Such quantities may help identify outlier layers or heads and provide principled signals for future pruning, normalization, or regularization studies.

\subsection{Extension to higher-order tensor data: connection walks beyond vector fields}
\label{subsec:tensor-extension}

The same connection-walk formalism extends from node vectors $x_i\in\R^d$ to matrix- and tensor-valued fibers by specifying the fiber space and transports.
Concretely, replace $\R^d$ by a representation space $\mathcal{F}$, such as $\R^{d_1\times d_2}$, and let transports act through appropriate representations, such as Kronecker-structured actions.
In vision and video, the base graph can be a grid or a space-time graph; $\Aeff$ encodes the domain topology, while transports align local frames across space-time.
Curvature then measures alignment inconsistency around loops, providing a principled notion of non-flat representation geometry for tensor-structured modalities.

\subsection{ADR view of Transformer blocks}
\label{subsec:adr-objectives}

A connection walk layer mixes neighbors and transports features across edges.
Its action naturally decomposes into three effects.
\textbf{Diffusion} arises from the \emph{symmetric or reversible} component of the walk, smoothing and spreading information across tokens. 
\textbf{Advection} arises from the \emph{antisymmetric, drift-like} component, producing directed, non-reversible flow. 
\textbf{Reaction} captures \emph{local reshaping}, including non-isometric parts of $\Oeff_{ij}$ and pointwise nonlinearities (e.g., FFNs), which amplify, contract, or rotate features in place. 
Thus, a connection walk can be read as an ADR operator on a vector field over the token graph.



The most faithful continuous analog of attention is generally nonlocal: for a continuous token-position variable $s$ and feature field $u(t,s)$, a connection-walk propagator is
\[
  (\mathcal{T}_t u)(s)=\int a_t(s,s')\,O_t(s,s')\,u(t,s')\,ds' .
\]
When the kernel localizes, the generator admits a local ADR approximation
\[
  F_t(v)=\nabla^{\!*}(D_t\nabla v)-b_t\cdot\nabla v+C_t v+\Phi_t(v),
\]
where $D_t$ is a diffusion tensor, $b_t$ is drift, $C_t$ captures linear zero-order effects such as non-isometric transport, and $\Phi_t$ captures nonlinear reaction.
A pre-LN block is naturally modeled as a normalized-forcing evolution
\[
  \partial_t u = F_t(\LN(u)),
\]
where the residual state evolves in ambient feature space while the vector field is evaluated on normalized inputs.
A post-LN block is better viewed as a projected evolution
\[
  \partial_t u = J_{\mathrm{LN}}(u)F_t(u),
\]
where normalization projects the infinitesimal ADR update onto the tokenwise normalization manifold.
These continuous formulations are interpretive; the theorem-level exactness remains the attention-sublayer connection-walk result.

The ADR view suggests measurable geometric quantities, such as connection-energy residuals and loop-consistency penalties, that may monitor depthwise stabilization or inspire future low-curvature objectives.
We treat such objectives as future directions rather than validated replacements.


\section{Experiments}
\label{sec:experiments}

In this section, we empirically investigate the geometric properties of multi-head attention in trained Transformers. We combine quantitative analysis across model families (from 124M to 8B parameters) with direct operator visualization. Specifically, we examine the following properties:
\begin{enumerate}[leftmargin=1em, topsep=0pt, itemsep=2pt, parsep=0pt]
    \item \textbf{Geometric Stability:} Whether the graph topology induced by the learned effective walk weights $\Aeff$ (row-stochastic by nature) stabilizes across layers and whether the distribution sharpens (more concentrated).
    \item \textbf{Transport Rigidity:} Whether the learned effective transports $\Oeff$ approximate scaled orthogonal operators, suggesting geometry-preserving transformations.
    \item \textbf{Scaling Robustness:} How these geometric properties vary with model scale and context length.
\end{enumerate}
We hypothesize that the geometric properties may vary across layer regimes. Since the initial and final layers perform embedding transformations and task-specific projections, we focus our analysis on the \textit{middle layers}, where the evolution of internal representations occurs primarily.

\subsection{Experimental setup}

\textbf{Models and data.}
The main study evaluates decoder-only Transformers across several model families: GPT-2 (Small, Medium, Large, XL), Qwen2.5 (3B, 7B), Qwen3 (4B, 8B), Llama-3.2 (3B), and Llama-3 (8B).
Experiments are conducted on the WikiText-2 dataset~\footnote{https://huggingface.co/datasets/Salesforce/wikitext.}.
We randomly sample 1,024 sequences, truncate them to a context length of 64 tokens unless otherwise specified, and report average statistics.
Appendix~\ref{app:bert_encoder} additionally reports encoder-only BERT diagnostics, and Appendix~\ref{app:connection_energy} reports connection-energy measurements.

\begin{xfigure}[t]
    \centering
    \begin{subfigure}[b]{0.48\linewidth}
        \centering
        \includegraphics[width=\linewidth]{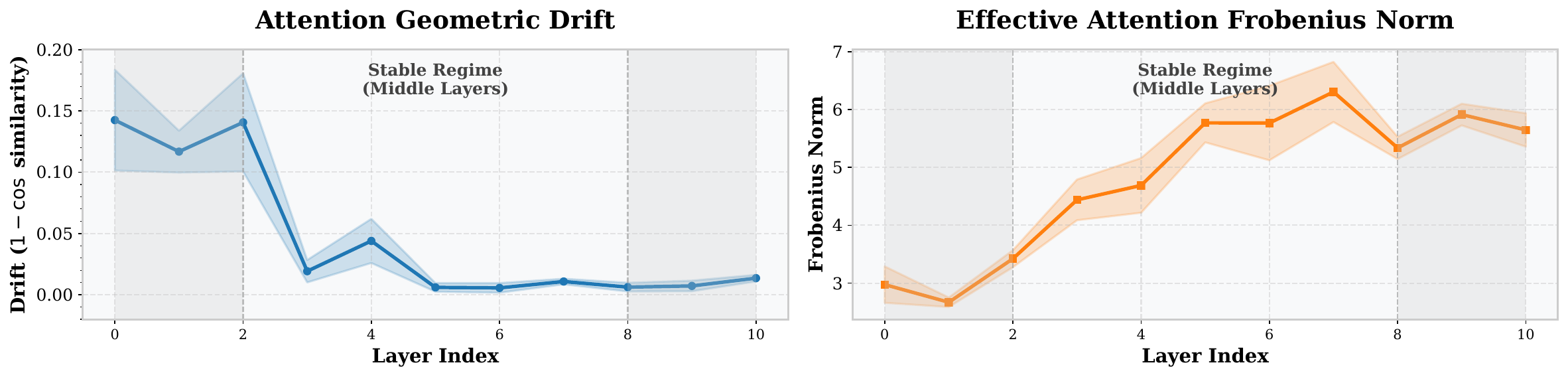}
        \caption{GPT-2 Small: Geometric Drift}
        \label{fig:drift_gpt2}
    \end{subfigure}
    \hfill
    \begin{subfigure}[b]{0.48\linewidth}
        \centering
        \includegraphics[width=\linewidth]{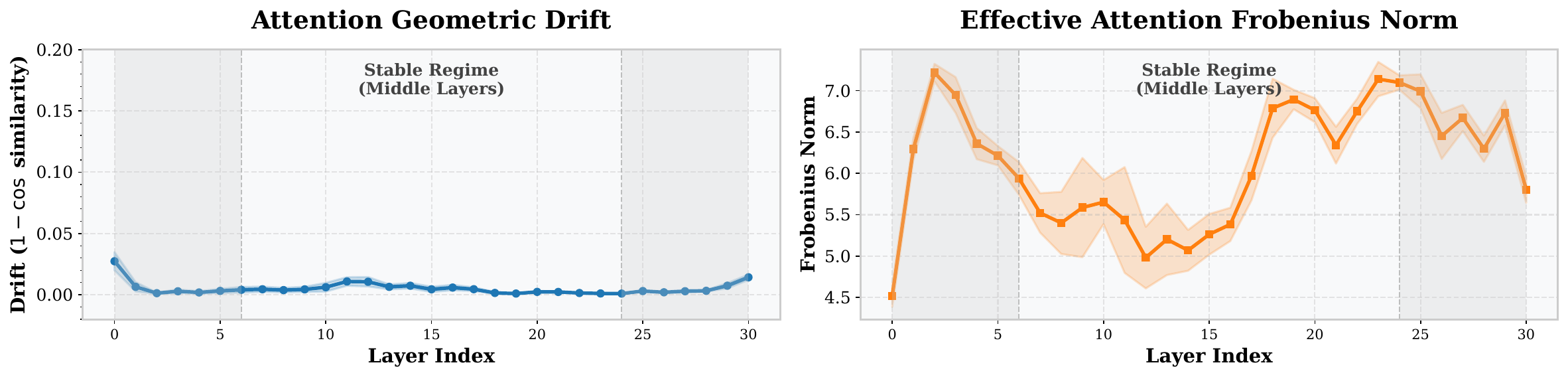}
        \caption{Llama-3-8B: Geometric Drift}
        \label{fig:drift_llama3}
    \end{subfigure}
    
    \vspace{0.1cm} 

    \begin{subfigure}[b]{0.48\linewidth}
        \centering
        \includegraphics[width=\linewidth]{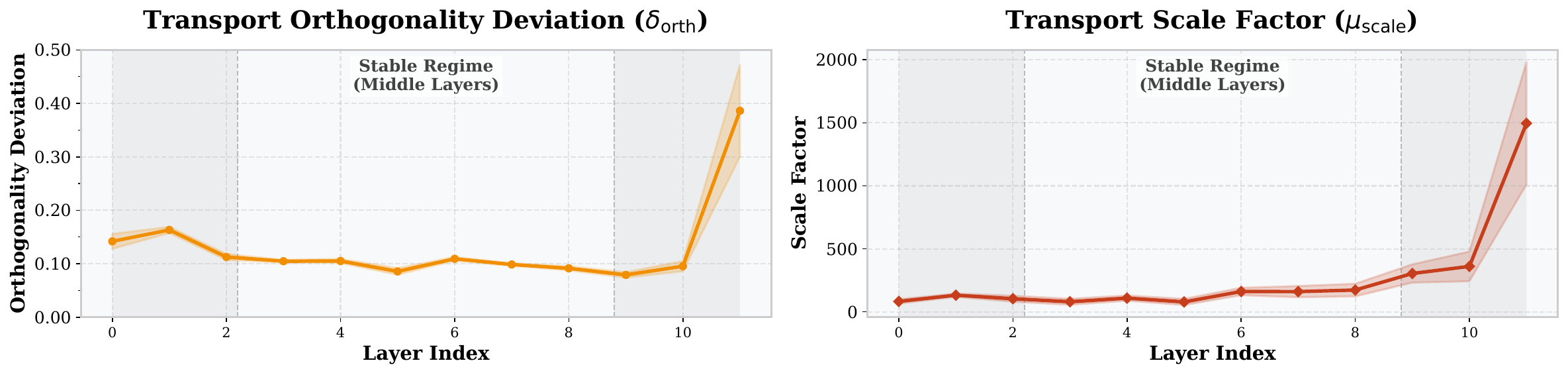}
        \caption{GPT-2 Small: Orthogonality Deviation}
        \label{fig:ortho_gpt2}
    \end{subfigure}
    \hfill
    \begin{subfigure}[b]{0.48\linewidth}
        \centering
        \includegraphics[width=\linewidth]{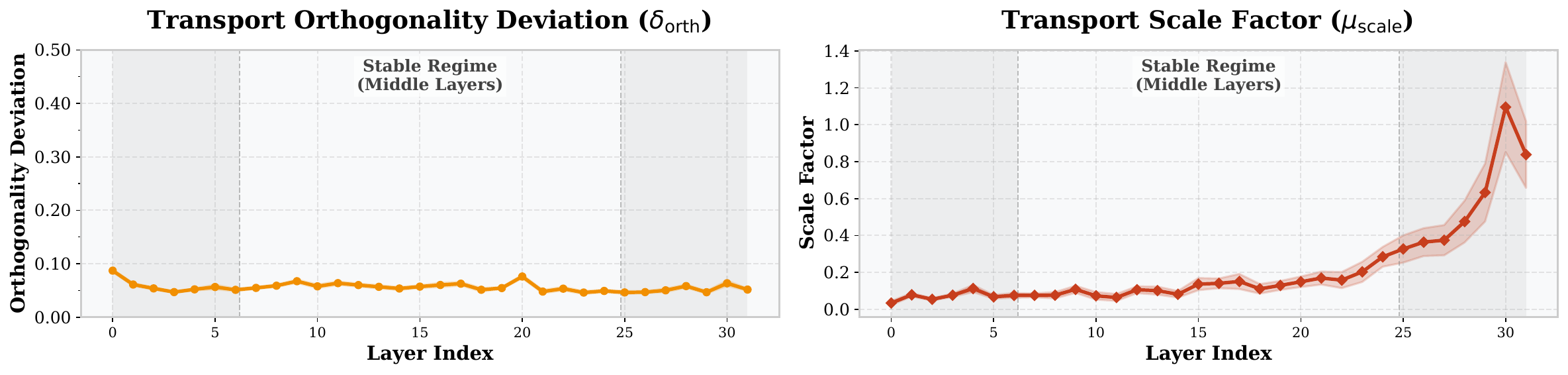}
        \caption{Llama-3-8B: Orthogonality Deviation}
        \label{fig:ortho_llama3}
    \end{subfigure}

    \caption{\textbf{Layerwise diagnostics of effective walks and transports.}
    Shaded regions indicate the initial and final layers, which are often dominated by embedding and output-projection effects; the discussion focuses on the \textit{stable middle-layer regime}.
    \textbf{(a, b) Geometric Drift:} Both models exhibit stabilization patterns. GPT-2 Small shows higher variance, whereas Llama-3-8B reaches a lower-drift middle-layer regime ($d_A^{\ell \rightarrow \ell+1} \approx 0$).
    \textbf{(c, d) Transport Orthogonality:} Llama-3-8B has lower orthogonality deviation under the scaled-isometry diagnostic ($\delta_{\text{orth}} \approx 0.05$ in middle layers), whereas GPT-2 Small exhibits larger deviation.}
    \label{fig:scaling_comparison}
\end{xfigure}

\textbf{Metrics.}
We track the layerwise evolution of the effective operators defined in Eqs.~\eqref{eq:Aeff-def}--\eqref{eq:Oeff-def}.

\begin{itemize}[leftmargin=1em, topsep=0pt, itemsep=2pt, parsep=0pt]
    \item \textbf{Geometric drift ($d_A$).}
    To measure stability of the token graph, we compute the cosine distance between effective attention matrices of layers over a support mask $\mask$:
    \begin{equation*}
        d_{A}^{\ell \to \ell+1} = 1 - \mathrm{cos\_sim}(\mathrm{vec}_{\mask}(\Aeff^{\ell+1}), \mathrm{vec}_{\mask}(\Aeff^{\ell})).
    \end{equation*}
    For decoder-only models, $\mask$ is causal lower-triangular support, so this reduces to flattening $\mathrm{tril}(\Aeff)$.
    For bidirectional encoders like BERT, $\mask$ is the full valid non-special-token attention mask.
    We also monitor $\|\Aeff^\ell\|_F$; for row-stochastic $\Aeff$, a larger Frobenius norm indicates a more localized walk distribution.
    
    \item \textbf{Transport orthogonality ($\delta_{\text{orth}}$).}
    To quantify closeness to a scaled orthogonal matrix, define the local Gram matrix $G_{ij}^\ell = {O_{ij}^{\mathrm{eff},\ell}}^\top O_{ij}^{\mathrm{eff},\ell}$ and mean scale $\mu_{\text{scale}}^\ell(i,j) = \frac{1}{d}\Tr(G_{ij}^\ell)$.
    We report the relative deviation
    \begin{equation}
        \delta_{\text{orth}}^\ell(i,j) = \frac{\|G_{ij}^\ell - \mu_{\text{scale}}^\ell(i,j) I_d\|_F}{\mu_{\text{scale}}^\ell(i,j)\, d}.
        \label{eq:transport_orth}
    \end{equation}
    The normalization yields a per-entry RMS deviation, making the metric comparable across hidden dimensions.
    We report weighted means across valid token pairs.
\end{itemize}

\subsection{Evolution of token geometry and transport}

We examine the layerwise evolution of geometric properties, comparing a \textit{small early} model (GPT-2 Small) and a \textit{large modern} model (Llama-3-8B) to assess the universality of our findings. Full results are presented in Appendix~\ref{app:full_plots}.

\textbf{Geometric stability.}
As shown in Figure~\ref{fig:scaling_comparison} (Top), the geometric drift $d_A$ exhibits a stabilization pattern across both models, with a more pronounced low-drift middle-layer regime in Llama-3-8B.
GPT-2 Small shows higher variance, whereas Llama-3-8B rapidly enters a regime where adjacent-layer effective walks are highly similar.
This suggests that the topology of token interactions can become approximately stationary across depth.
Additionally, we observe that $\norm{\Aeff^\ell}_F$ consistently increases with depth. This implies that while the geometry stabilizes, the diffusion process becomes progressively more concentrated, preserving distinct features rather than oversmoothing them.

\textbf{Transport rigidity.}
Figure~\ref{fig:scaling_comparison} (Bottom) shows that $\Oeff$ approaches an approximate scaled-isometric regime under the diagnostic in Eq.~\eqref{eq:transport_orth}.
In Llama-3-8B, the middle-layer orthogonality deviation can be as low as $\delta_{\text{orth}}\approx 0.05$.
GPT-2 Small exhibits larger deviations, whereas the larger modern model in this comparison has lower scaled-isometry deviation under the same diagnostic.
Note that this observation is descriptive: it does not establish that lower $\delta_{\text{orth}}$ causes better downstream performance.

\begin{xfigure}[t]
    \centering
    \begin{subfigure}[b]{\linewidth}
        \centering
        \includegraphics[width=\linewidth]{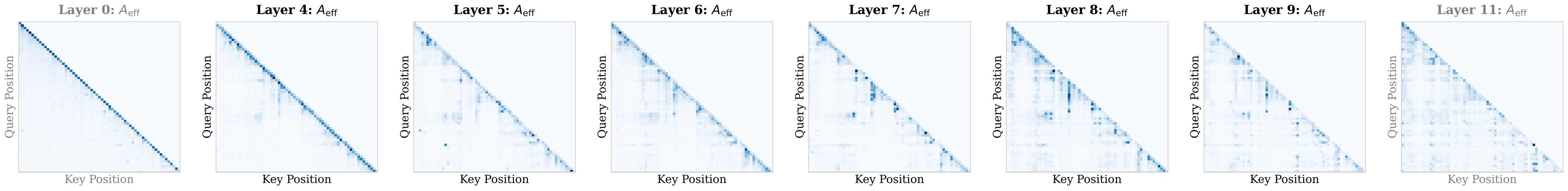}
        \caption{Evolution of Effective Walk Weights $\Aeff$ (First token attention sink removed)}
        \label{fig:heatmap_aeff}
    \end{subfigure}
    
    \vspace{0.1cm}

    \begin{subfigure}[b]{\linewidth}
        \centering
        \includegraphics[width=\linewidth]{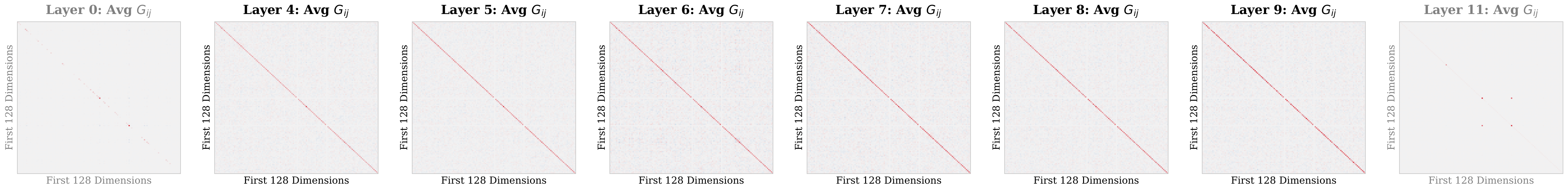}
        \caption{Evolution of Transport Gram Matrix $G_{ij}$ (Subset of 128 channels)}
        \label{fig:heatmap_gram}
    \end{subfigure}

    \caption{\textbf{Visualization of Learned Operators (GPT-2 Small).} 
    \textbf{(a) Effective walk weights ($\Aeff$):} With the attention sink removed, middle layers exhibit a coherent structure, consistent with a stable local-to-global topology.
    \textbf{(b) Transport Gram structure ($G_{ij}$):} The Gram matrix is strongly diagonal-dominant. The suppression of off-diagonal elements is consistent with approximate scaled-isometry under the proposed diagnostic, while not ruling out non-isometric deformation in general.}
\label{fig:heatmaps}
\end{xfigure}

\subsection{Visualizing the effective operators}

To provide intuition beyond scalar metrics, we directly visualize the learned operators $\Aeff$ and the averaged Gram matrix of the transports $G_{ij}$ for GPT-2 Small in Figure~\ref{fig:heatmaps}.

\textbf{Attention as walk weights ($\Aeff$).}
Figure~\ref{fig:heatmap_aeff} visualizes the effective walk weights after removing the first-token attention sink.
Middle layers exhibit a coherent structure, consistent with a stable local-to-global topology rather than random sparse mixing.

\textbf{Transport Gram structure ($G_{ij}$).}
Figure~\ref{fig:heatmap_gram} visualizes the observed diagonal dominance of the mean transport Gram matrix.
Quantitatively, diagonal elements are approximately $15$--$20\times$ larger than off-diagonal elements across many middle-layer regimes (Appendix~\ref{app:orth_stats}).
This is consistent with approximate scaled-isometry under our diagnostic, while still allowing non-isometric deformation in general.

\subsection{Analysis across scales and context lengths}
\label{subsec:analysis-scales-context-lengths}

\begin{table}[t]
\centering
\caption{\textbf{Quantitative analysis of Transport Orthogonality ($\delta_{\text{orth}}$) across model families.}
We report deviations from scaled isometry across the initial, middle, and final layers. Lower values indicate closer agreement with this diagnostic, not necessarily better downstream task performance.
}
\label{tab:orth_stats}
\tabfit{%
    \begin{small}
    \begin{tabular}{lccc} 
    \toprule

    & \multicolumn{3}{c}{Relative Layer Depth} \\
    \cmidrule(lr){2-4}
    
    Model & \makecell{Initial \\ (20\%)} & \makecell{Middle \\ (60\%)} & \makecell{Final \\ (20\%)} \\
    \midrule
    
    \rowcolor{bg-gray} 
    \multicolumn{4}{l}{\textit{GPT-2 Family}} \\ 
    \hspace{1em} Small (124M)   & 0.152 & 0.096 & 0.284 \\
    \hspace{1em} Medium (355M)  & 0.180 & 0.089 & 0.228 \\
    \hspace{1em} Large (774M)   & 0.102 & 0.066 & 0.206 \\
    \hspace{1em} XL (1.5B)      & 0.089 & 0.065 & 0.112 \\
    
    \addlinespace[0.3em] 

    \rowcolor{bg-gray}
    \multicolumn{4}{l}{\textit{Qwen2.5 Family}} \\
    \hspace{1em} Base 3B      & 0.083 & 0.084 & 0.114 \\
    \hspace{1em} Base 7B      & 0.066 & 0.067 & 0.094 \\

    \addlinespace[0.3em] 
    
    \rowcolor{bg-gray}
    \multicolumn{4}{l}{\textit{Qwen3 Family}} \\
    \hspace{1em} Base 4B      & 0.079 & 0.057 & 0.053 \\
    \hspace{1em} Base 8B      & 0.064 & \textbf{0.052} & \textbf{0.051} \\
    
    \addlinespace[0.3em]
    
    \rowcolor{bg-gray}
    \multicolumn{4}{l}{\textit{Llama-3 Family}} \\
    \hspace{1em} Base 8B      & \textbf{0.062} & 0.057 & 0.052 \\
    
    \bottomrule
    \end{tabular}
    \end{small}
}
\end{table}

Finally, we systematically analyze how these geometric properties correlate with model capacity and context length.

\textbf{Scaling with Model Size.}
Table~\ref{tab:orth_stats} reports the mean orthogonality deviation across different model families. We observe a robust scaling trend: \textit{larger models tend to learn more rigid geometric transports.}
Deviations generally decrease as model size increases, with larger, modern models (Qwen3, Llama-3) achieving lower deviations ($\approx 0.05$). This trend suggests that the emergence of quasi-isometric transport is not an accidental artifact of small models, but rather a structural preference that stronger models exhibit more distinctly, while we do not present this as a general downstream performance predictor.

\begin{figure}[t]
    \centering
    \includegraphics[width=0.7\linewidth]{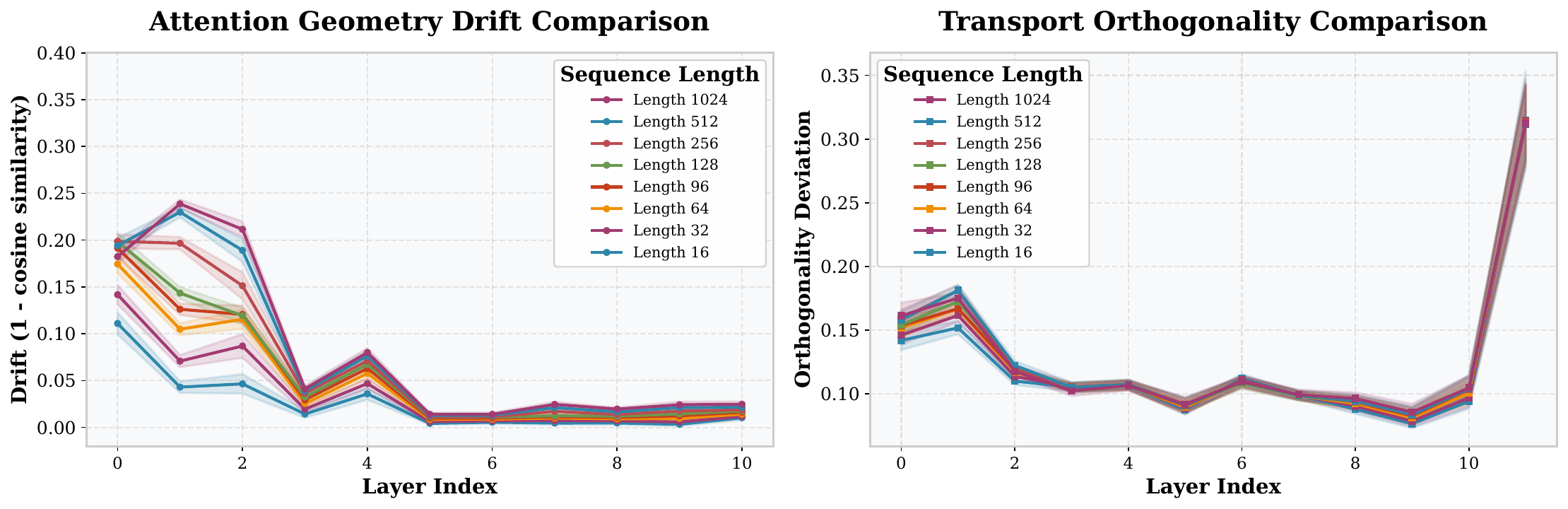}
    \caption{\textbf{Sensitivity to context length (GPT-2 Small).}
    We vary the input context length from $16$ to $1{,}024$ tokens. Longer contexts increase early-layer drift and mildly affect orthogonality, while deeper layers recover a similar low-drift and low-deviation profile in this tested range.}
\label{fig:ablation_main}
\end{figure}

\textbf{Sensitivity to context length.}
Figure~\ref{fig:ablation_main} evaluates GPT-2 Small as context length varies from $16$ to $1{,}024$ tokens.
Longer contexts increase early-layer drift and mildly weaken orthogonality, which is expected because maintaining a coherent geometry over a larger graph is more difficult.
However, the overall profile remains similar: deeper layers recover a low-drift and low-deviation regime within the tested range.
Appendix~\ref{app:ablation} reports full-length ablations across model families, and Appendix~\ref{app:long_context} provides a supporting extended-range diagnostic.

\textbf{Encoder-only and energy diagnostics.}
Appendix~\ref{app:bert_encoder} reports BERT experiments using the full valid bidirectional non-special-token mask, and Appendix~\ref{app:connection_energy} reports connection-energy diagnostics for GPT-2 Small and BERT.
These results suggest that the diagnostic phenomena are not limited to decoder-only models, although layerwise patterns remain architecture-dependent and descriptive.

In summary, the experiments show diagnostic signatures captured by the connection-walk formalism: \textit{effective walks often stabilize across depth, and effective transports often approach an approximate scaled-isometric regime}.


\section{Limitations}
\label{sec:limitations}

First, the exact operator identification is an attention-sublayer result.
A full Transformer block also contains residual connections, LayerNorm or RMSNorm, and pointwise nonlinear FFNs.
We interpret these as local or zero-order components in an ADR view, but we do not provide a complete theorem for the full nonlinear block composition.

Second, the empirical study is descriptive and diagnostic.
We do not establish that lower drift or more orthogonal transports causally improve downstream performance, nor do we show that enforcing these properties yields better models.

Third, the main empirical evaluation focuses on pretrained models under controlled text probes.
Additional encoder-only and longer-context diagnostics broaden the scope, but a systematic study of encoder-decoder models, multimodal settings, and task-level behavior remains future work.

Fourth, computing effective edge transports can be expensive, and our diagnostics rely on sampled sequences and practical normalizations.
These measurements should be understood as probes of trained operator geometry rather than complete summaries of model behavior.


\section{Conclusion}
\label{sec:conclusion}

We provided a geometric, operator-level identification of self-attention by modeling token representations as a vector field over the token graph and showing that \textbf{single-head attention} is exactly a connection walk with constant transport, while \textbf{multi-head attention} is exactly a single edge-dependent connection propagation step whose effective transport is an attention-gated mixture of head transports. Empirically, across trained decoder-only Transformers from 124M to 8B parameters, we observe consistent geometric signatures predicted by the theory: effective walks stabilize across depth, effective transports approach scaled isometries in middle layers, and both phenomena strengthen as model scales. Looking forward, this connection walk perspective suggests geometry-aware objectives (energy and holonomy regularization), constraint-preserving parameterizations (near-orthogonal or group-valued transports, inverse consistency, symmetrizable walks), and diagnostic-driven pruning or routing of heads and edges, as well as extensions to tensor-valued fibers on grids and space-time graphs for vision and video, and a deeper analysis of depth dynamics via continuous and ADR limits for non-reversible, non-isometric operators.





\section*{Impact Statement}

This paper presents work whose goal is to advance the field of Machine
Learning. There are many potential societal consequences of our work, none
of which we feel must be specifically highlighted here.

\clearpage
\appendix
\section{Connection Laplacians: Continuous Background}
\label{sec:conn-laplacians}
Let $(\mathcal{M},g)$ be a Riemannian manifold and let $E\!\to\!\mathcal{M}$ be a rank-$d$ vector bundle with a metric-compatible connection $\nabla$.
For a section $X\in\Gamma(E)$, the (rough) \emph{connection Laplacian} is
\[
  \Delta_{\nabla}X \;=\; \nabla^*\nabla X \;=\; -\,\mathrm{tr}_g\,\nabla^2 X .
\]
It generates diffusion of vector fields via the heat equation
\[
  \partial_t X(t,\cdot) \;=\; -\,\Delta_{\nabla} X(t,\cdot), \qquad X(0,\cdot)=X_0 .
\]
The corresponding \emph{heat kernel} is a family of linear maps $K_t(x,y):E_y\!\to\!E_x$ with
$X(t,x)=\int_{\mathcal M}K_t(x,y)\,X_0(y)\, \mathrm{dvol}_g(y)$ and the semigroup
$K_{t+s}(x,z)=\int K_t(x,y)K_s(y,z)\,\mathrm{dvol}_g(y)$.
For small $t$, $K_t(x,y)$ concentrates near geodesics and factors (to leading order) into a scalar Gaussian weight times the \emph{parallel transport} along the geodesic.
Parallel transport $\mathsf{P}_\gamma:E_{\gamma(0)}\!\to\!E_{\gamma(1)}$ is the path-ordered exponential of $\nabla$, and its closed-loop product (holonomy) encodes curvature.

Row-stochastic $A$ provides the \emph{heat (diffusion) weights}; the one-step propagator is $I-\Lcrw=\mathcal{T}(A,O)$.
Discrete $k$-step diffusion is $\mathcal{T}^k$, whose entries sum over paths (random walk weights $\prod A$ times cumulative transport $O_{\gamma}$).
In continuous time, $e^{-t\Lcrw}$ is the vector-valued heat kernel on the graph.

\clearpage

\section{Complete Evolution Plots}
\label{app:full_plots}

We provide a detailed layerwise breakdown for each model. For every model, we present two side-by-side plots:
\begin{itemize}[topsep=0pt, itemsep=0pt, parsep=0pt]
    \item \textbf{Left:} Geometric Drift ($d_A$) and Frobenius Norm ($\|\Aeff\|_F$).
    \item \textbf{Right:} Transport Orthogonality Deviation ($\delta_{\text{orth}}$) and Scale Factor ($\mu_{\text{scale}}$).
\end{itemize}
This layout supports visual comparison between the stabilization of the token graph topology and the transport-orthogonality diagnostic.

\subsection{GPT-2 Series (from 124M to 1.5B)}

The GPT-2 family (Figure~\ref{fig:app_gpt2_all}) illustrates the effect of scaling within an older architecture. 
\textbf{Observation:} GPT-2 Small (Row 1) exhibits significant variance in orthogonality deviation ($\delta_{\text{orth}} \approx 0.15$) and slower geometric stabilization. As we scale up to GPT-2 XL (Row 4), the curves become noticeably smoother, and the orthogonality deviation drops to $\approx 0.06$, resembling modern large models. This suggests a scale-associated trend toward lower transport deviation in this controlled sample.

\begin{figure}[h!]
    \centering
    \begin{subfigure}{0.48\textwidth}
        \includegraphics[width=\linewidth]{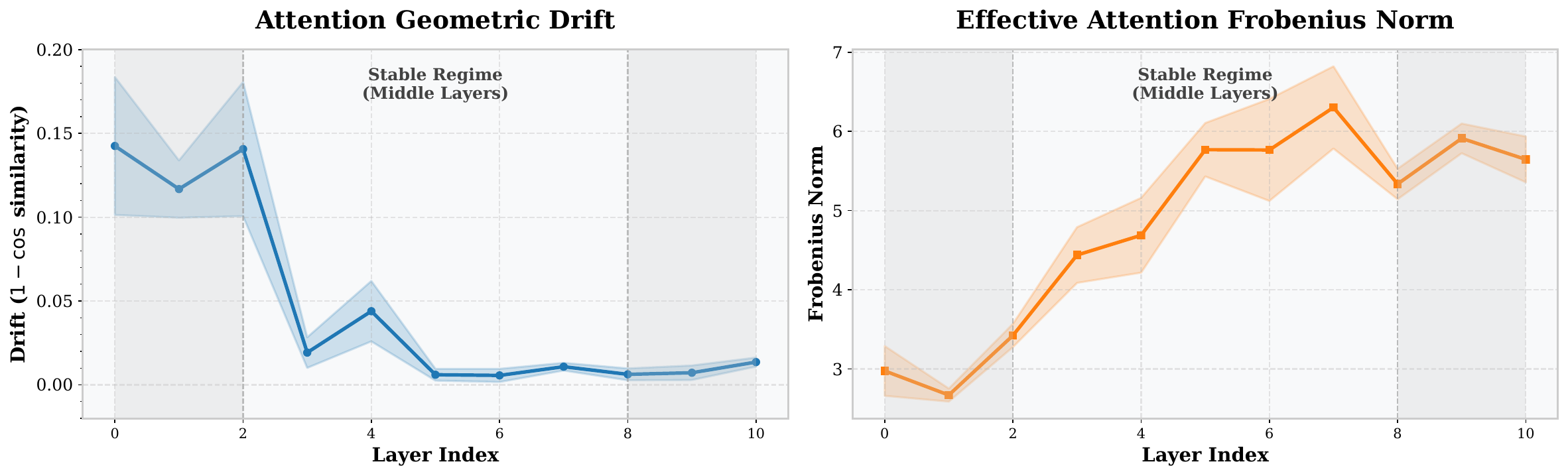}
        \caption{GPT-2 Small: Geometric Drift}
    \end{subfigure}
    \hfill
    \begin{subfigure}{0.48\textwidth}
        \includegraphics[width=\linewidth]{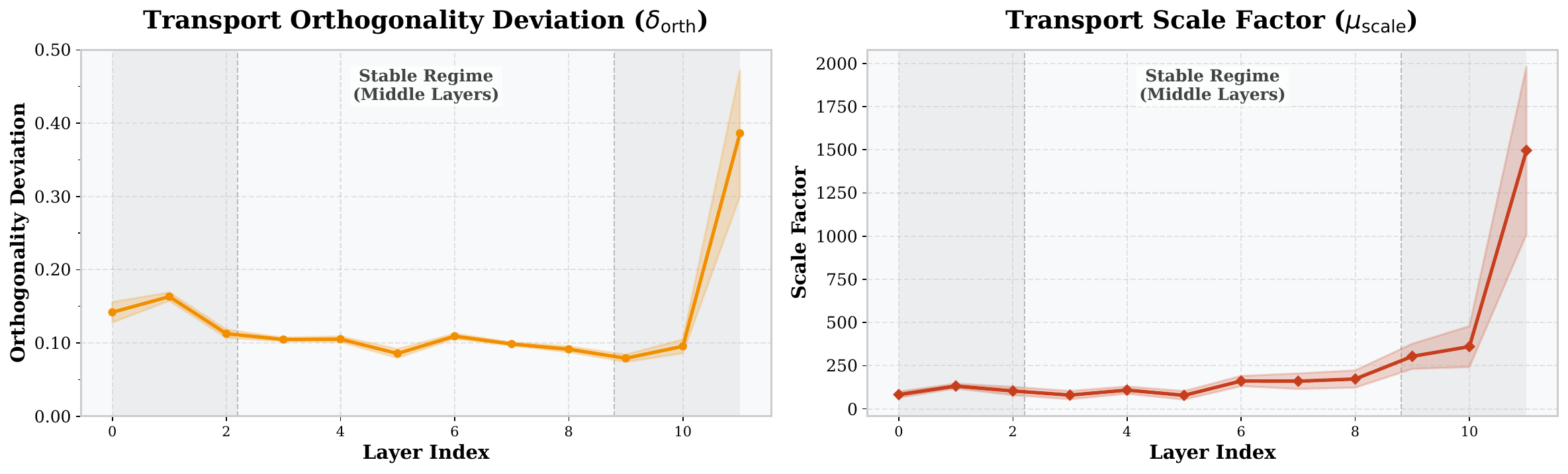}
        \caption{GPT-2 Small: Orthogonality}
    \end{subfigure}
    
    \vspace{0.2cm}

    \begin{subfigure}{0.48\textwidth}
        \includegraphics[width=\linewidth]{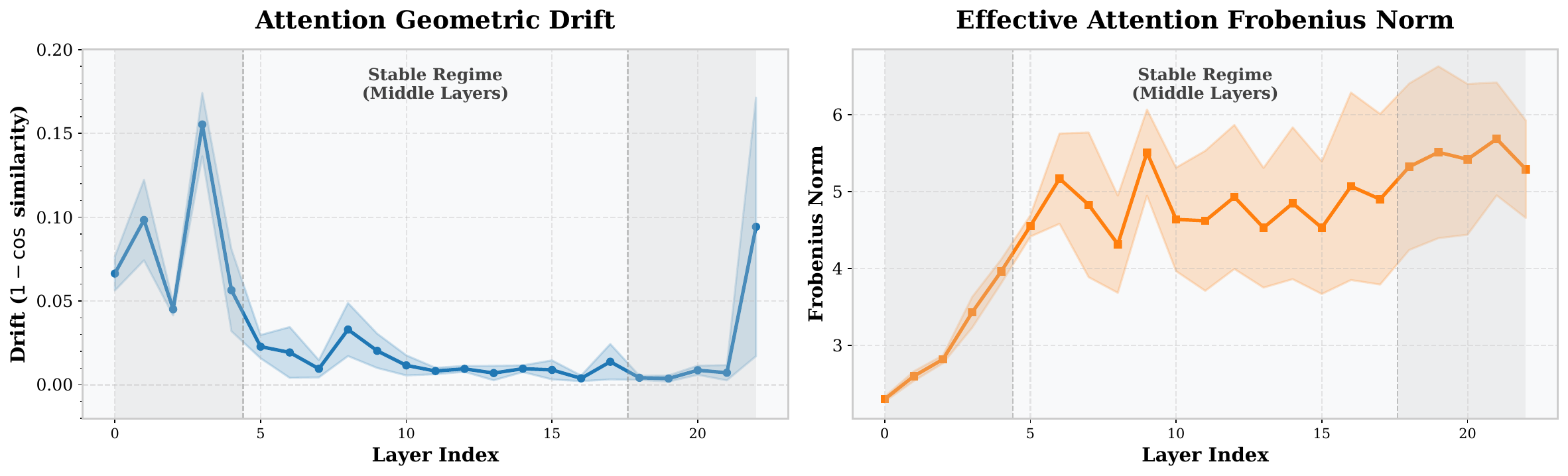}
        \caption{GPT-2 Medium: Geometric Drift}
    \end{subfigure}
    \hfill
    \begin{subfigure}{0.48\textwidth}
        \includegraphics[width=\linewidth]{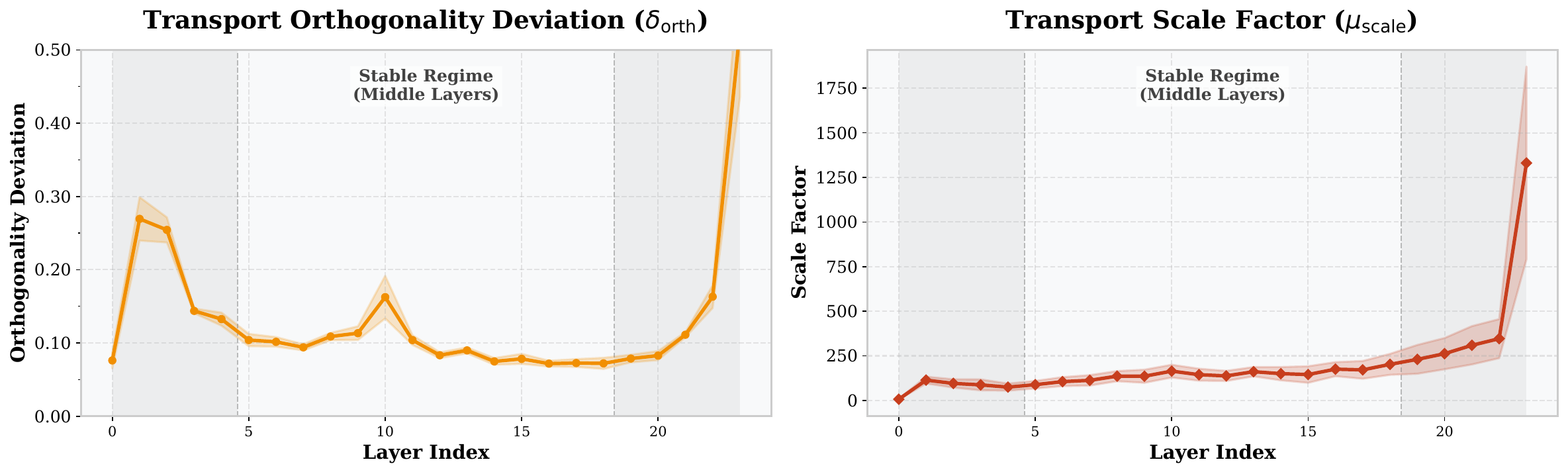}
        \caption{GPT-2 Medium: Orthogonality}
    \end{subfigure}
    
    \vspace{0.2cm}

    \begin{subfigure}{0.48\textwidth}
        \includegraphics[width=\linewidth]{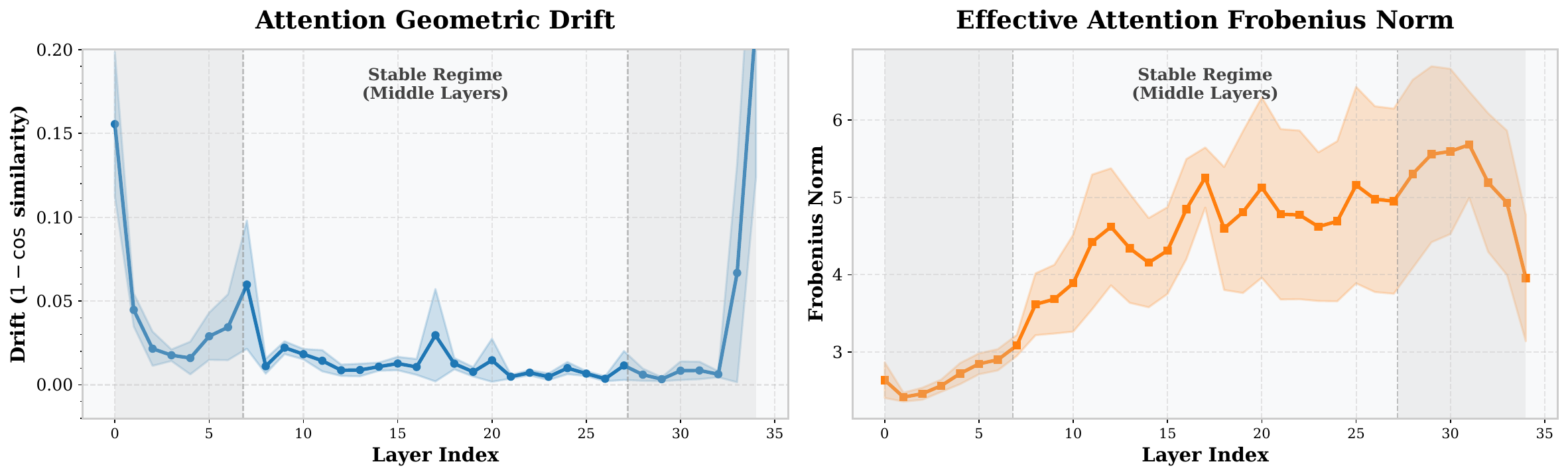}
        \caption{GPT-2 Large: Geometric Drift}
    \end{subfigure}
    \hfill
    \begin{subfigure}{0.48\textwidth}
        \includegraphics[width=\linewidth]{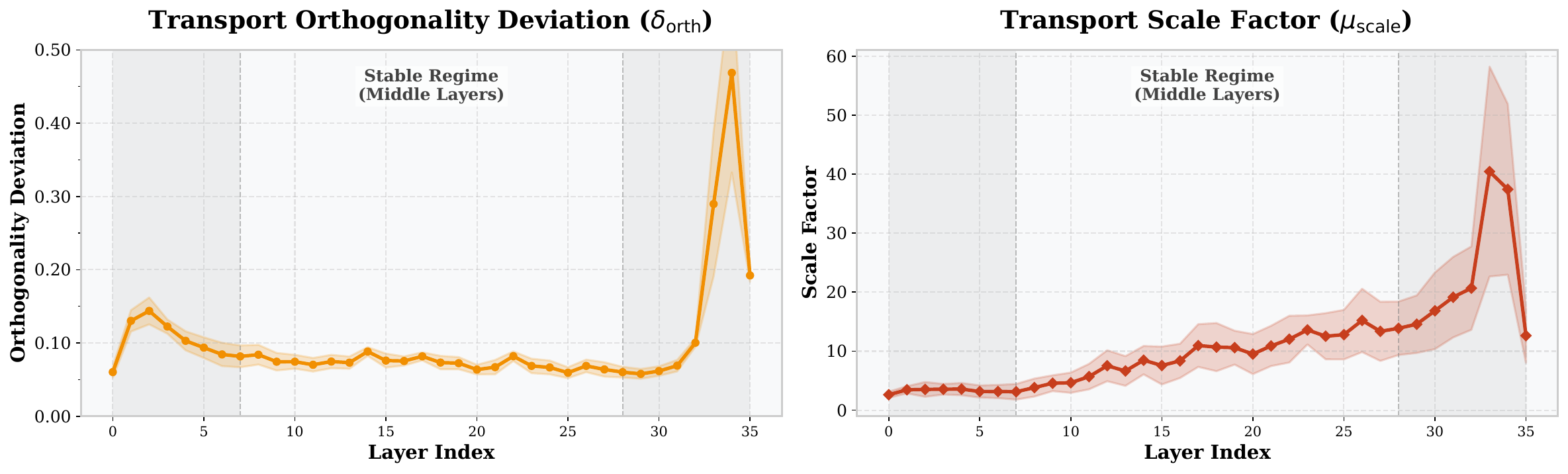}
        \caption{GPT-2 Large: Orthogonality}
    \end{subfigure}

    \vspace{0.2cm}

    \begin{subfigure}{0.48\textwidth}
        \includegraphics[width=\linewidth]{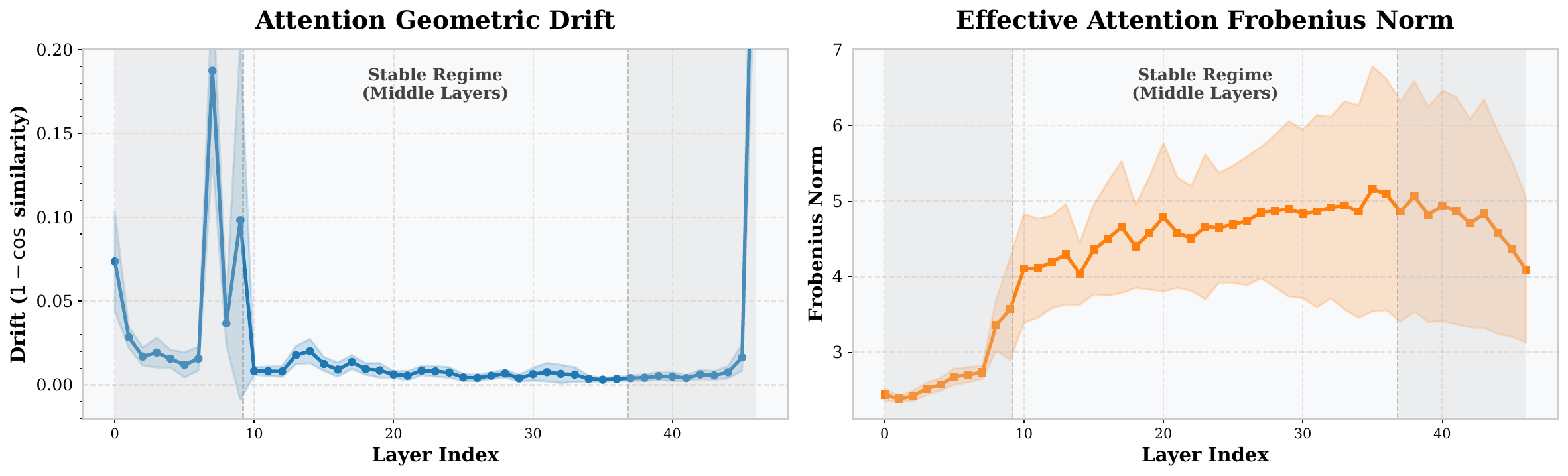}
        \caption{GPT-2 XL: Geometric Drift}
    \end{subfigure}
    \hfill
    \begin{subfigure}{0.48\textwidth}
        \includegraphics[width=\linewidth]{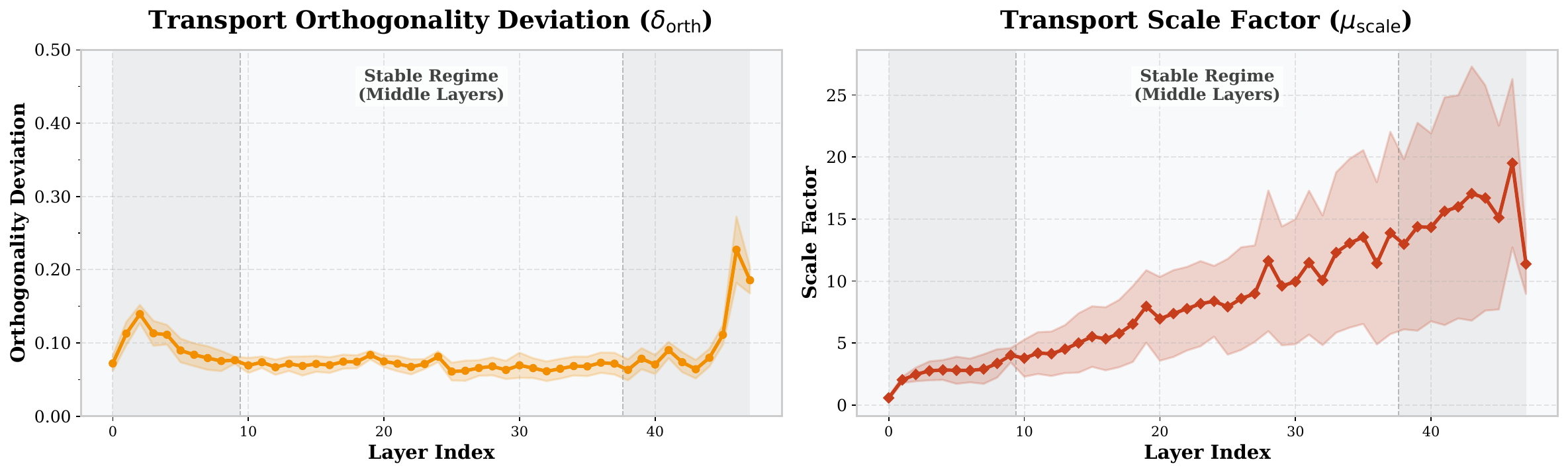}
        \caption{GPT-2 XL: Orthogonality}
    \end{subfigure}

    \caption{\textbf{GPT-2 Series Evolution.} Comparison of geometric stability (Left) and transport orthogonality (Right) across four model sizes. The larger variants in this controlled family show smoother curves and lower orthogonality deviation.}
    \label{fig:app_gpt2_all}
\end{figure}

\subsection{Qwen Series (2.5 \& 3)}

Figure~\ref{fig:app_qwen_all} shows the results for the Qwen2.5 and Qwen3 families. 
\textbf{Observation:} These modern architectures display a characteristic ``U-shape" in drift, with low drift in the deep middle layers. Qwen3-8B achieves low orthogonality deviation under the scaled-isometry diagnostic.

\begin{figure}[h!]
    \centering
    \begin{subfigure}{0.48\textwidth}
        \includegraphics[width=\linewidth]{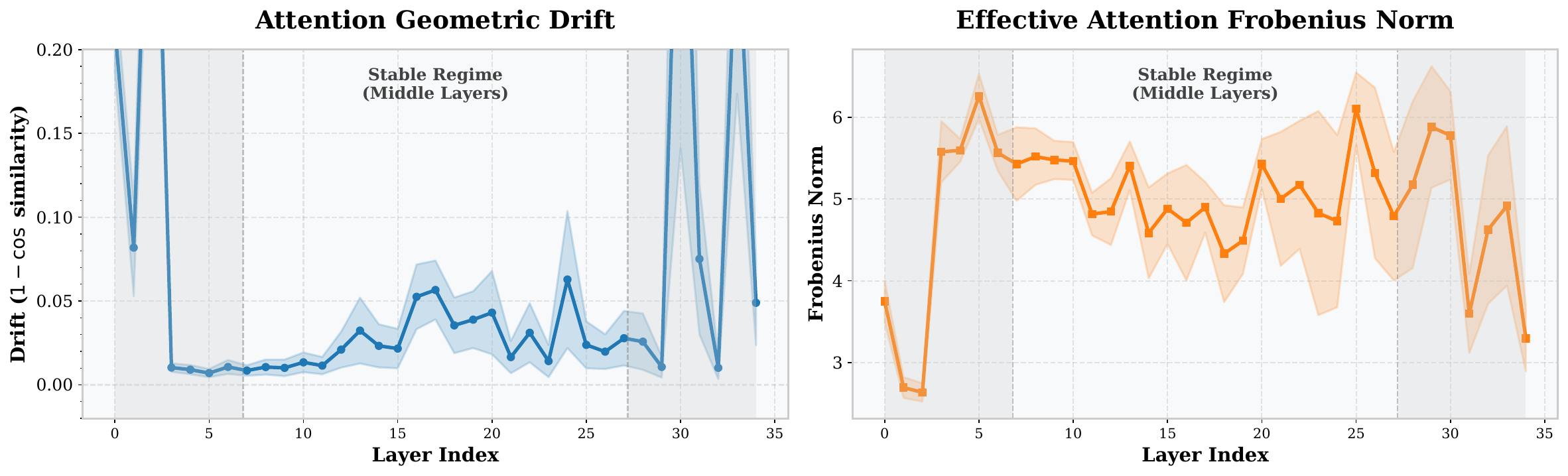}
        \caption{Qwen2.5-3B: Geometric Drift}
    \end{subfigure}
    \hfill
    \begin{subfigure}{0.48\textwidth}
        \includegraphics[width=\linewidth]{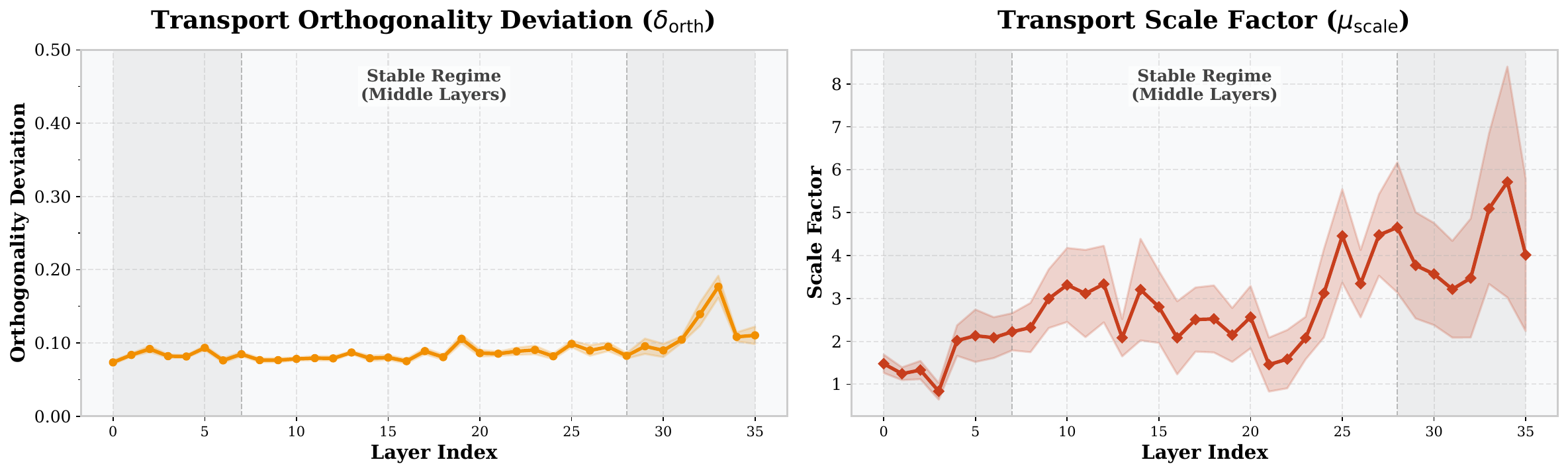}
        \caption{Qwen2.5-3B: Orthogonality}
    \end{subfigure}
    
    \vspace{0.2cm}

    \begin{subfigure}{0.48\textwidth}
        \includegraphics[width=\linewidth]{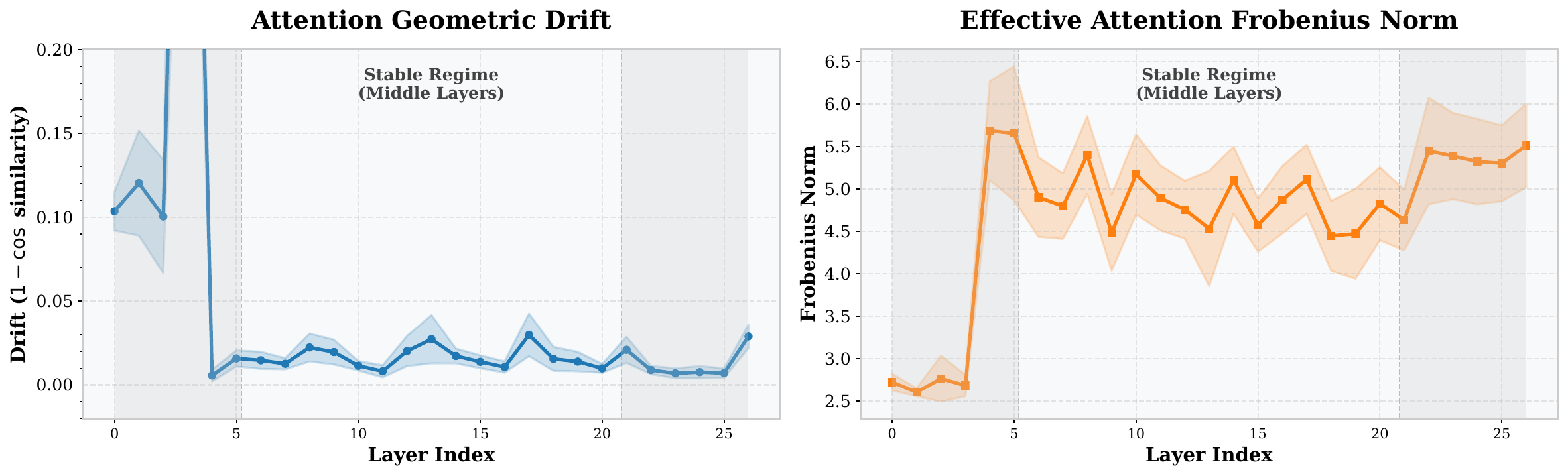}
        \caption{Qwen2.5-7B: Geometric Drift}
    \end{subfigure}
    \hfill
    \begin{subfigure}{0.48\textwidth}
        \includegraphics[width=\linewidth]{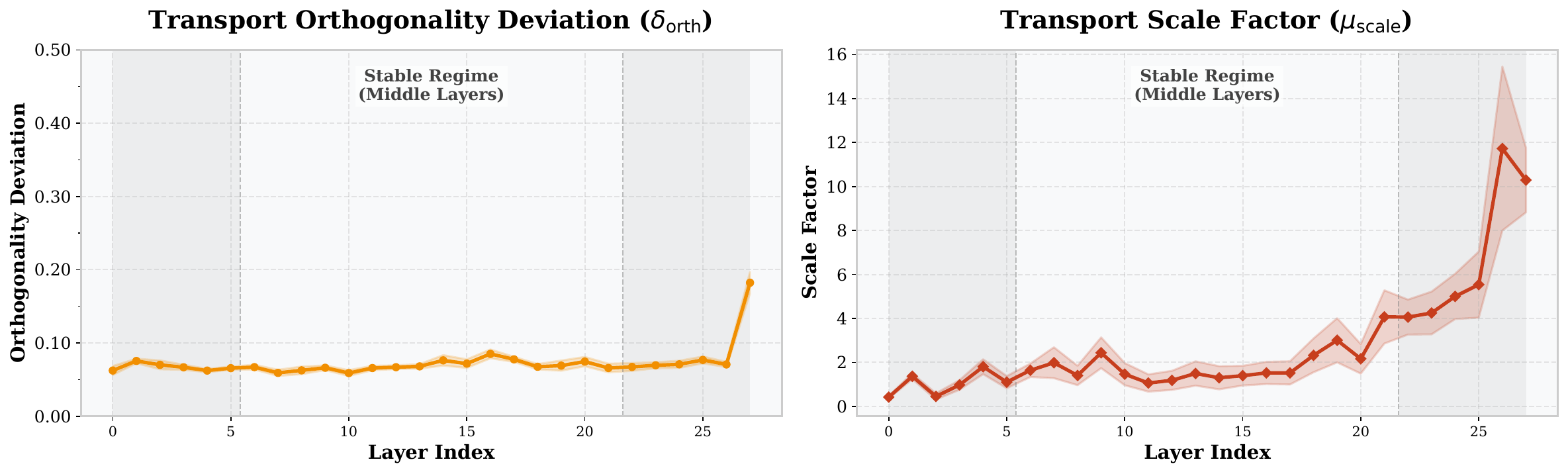}
        \caption{Qwen2.5-7B: Orthogonality}
    \end{subfigure}

    \vspace{0.2cm}

    \begin{subfigure}{0.48\textwidth}
        \includegraphics[width=\linewidth]{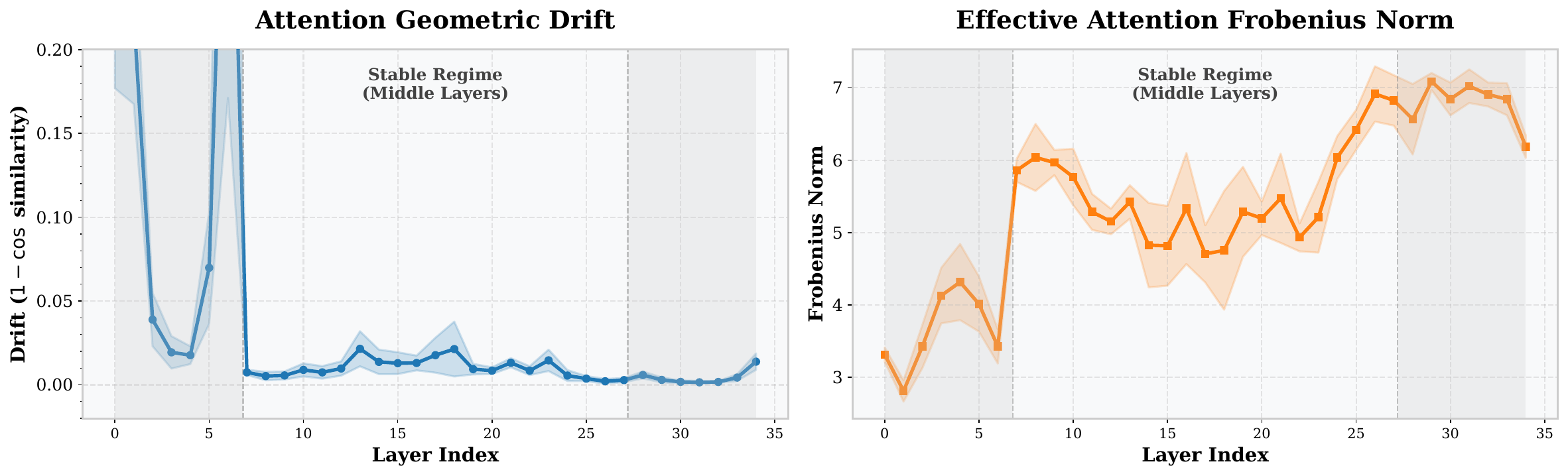}
        \caption{Qwen3-4B: Geometric Drift}
    \end{subfigure}
    \hfill
    \begin{subfigure}{0.48\textwidth}
        \includegraphics[width=\linewidth]{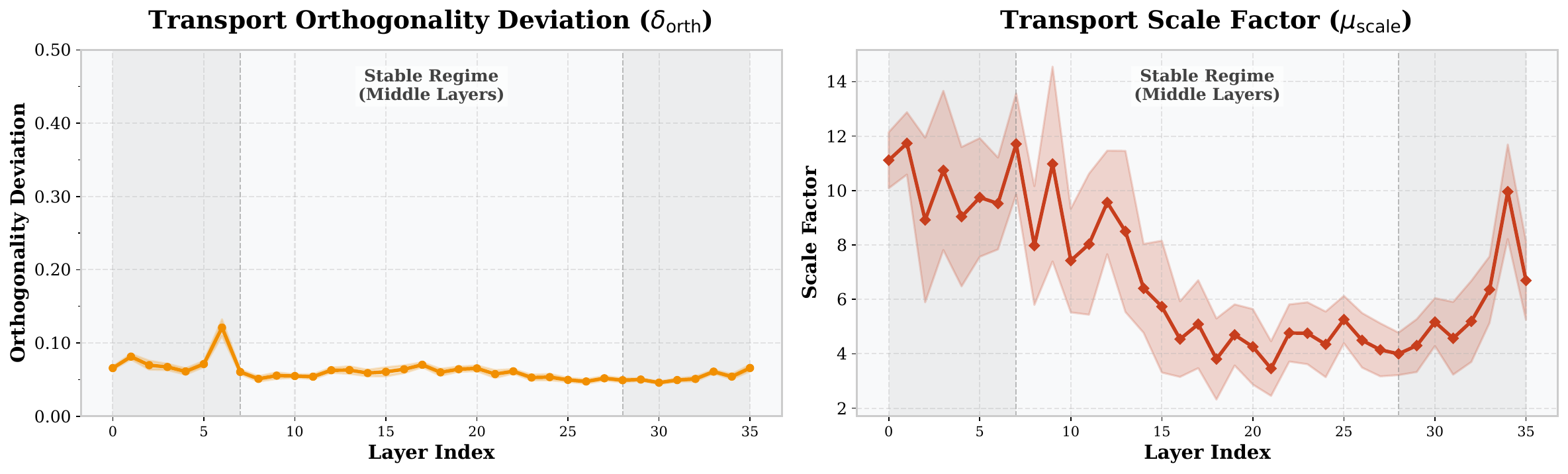}
        \caption{Qwen3-4B: Orthogonality}
    \end{subfigure}
    
    \vspace{0.2cm}

    \begin{subfigure}{0.48\textwidth}
        \includegraphics[width=\linewidth]{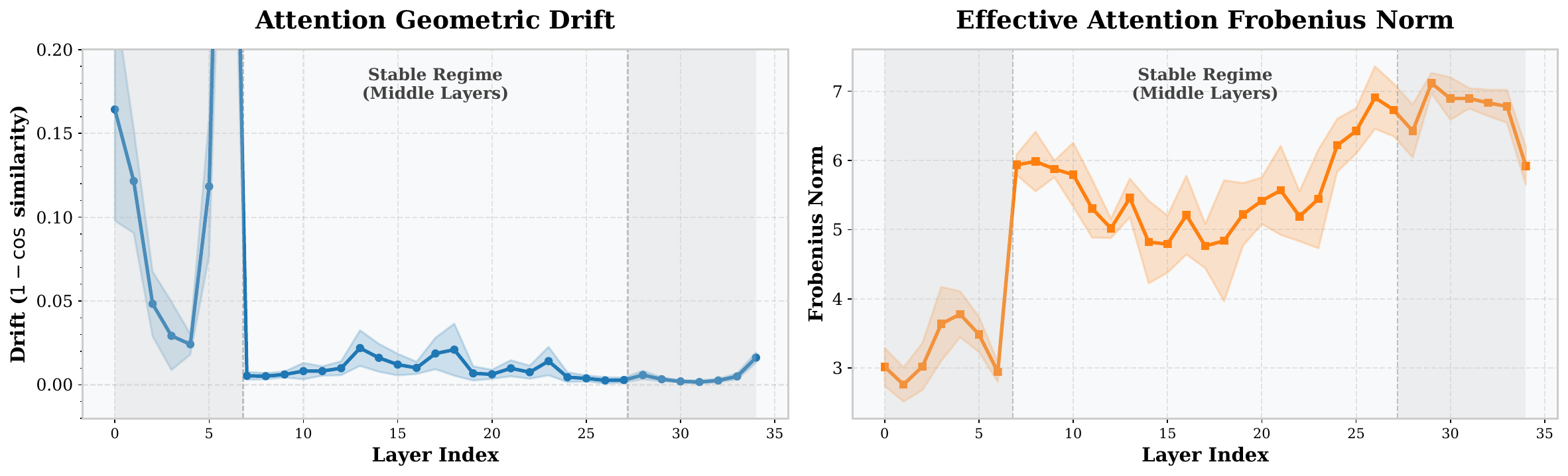}
        \caption{Qwen3-8B: Geometric Drift}
    \end{subfigure}
    \hfill
    \begin{subfigure}{0.48\textwidth}
        \includegraphics[width=\linewidth]{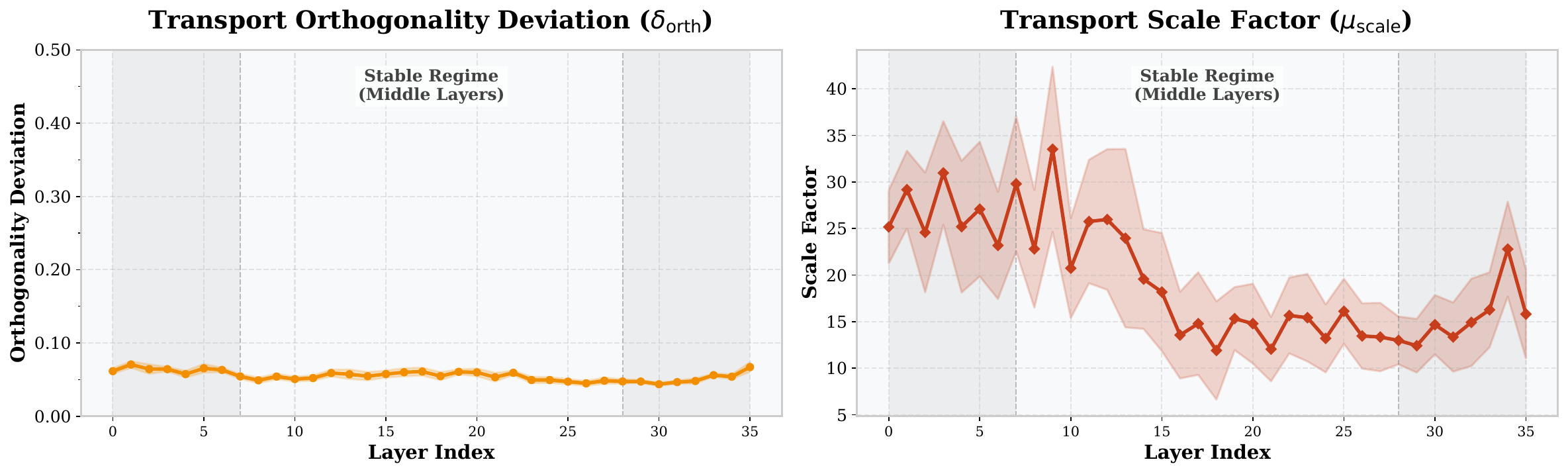}
        \caption{Qwen3-8B: Orthogonality}
    \end{subfigure}

    \caption{\textbf{Qwen Series Evolution.} Both Qwen2.5 and Qwen3 show low middle-layer drift under the geometric diagnostic.}
    \label{fig:app_qwen_all}
\end{figure}

\subsection{Llama Series (3.2 \& 3)}

Figure~\ref{fig:app_llama_all} displays the Llama-3 family results. 
\textbf{Observation:} Llama-3-8B has among the lowest transport deviations in our evaluation, with $\delta_{\text{orth}}$ consistently below $0.06$ for the majority of layers. The drop in drift (Left) and stabilization of orthogonality (Right) are particularly clear in this model.

\begin{figure}[h!]
    \centering
    \begin{subfigure}{0.48\textwidth}
        \includegraphics[width=\linewidth]{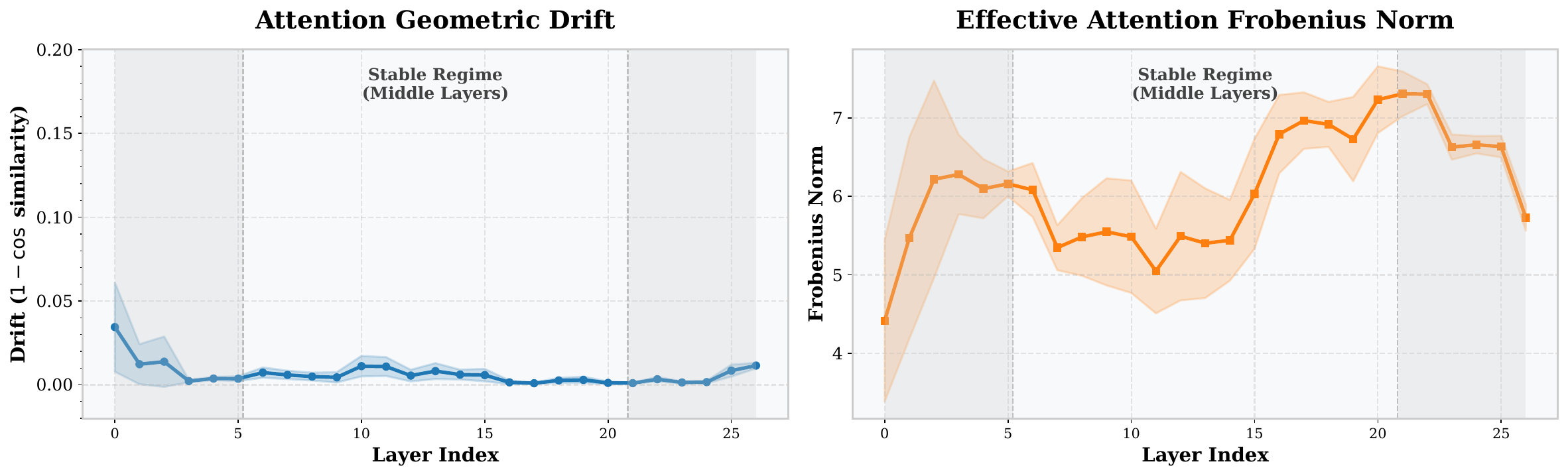}
        \caption{Llama-3.2-3B: Geometric Drift}
    \end{subfigure}
    \hfill
    \begin{subfigure}{0.48\textwidth}
        \includegraphics[width=\linewidth]{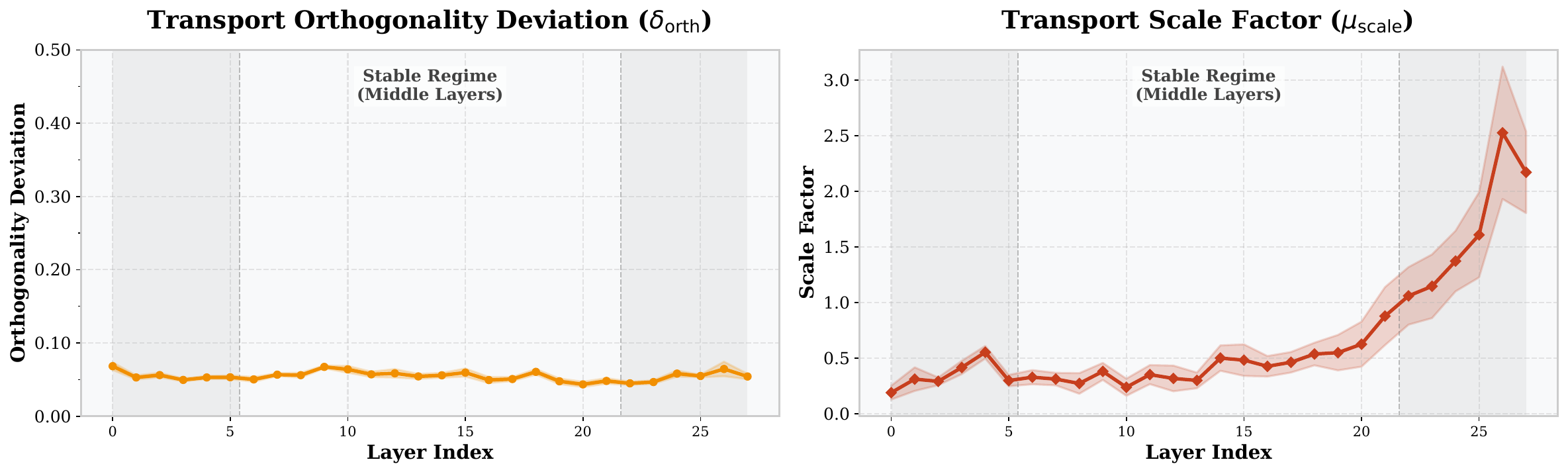}
        \caption{Llama-3.2-3B: Orthogonality}
    \end{subfigure}
    
    \vspace{0.2cm}

    \begin{subfigure}{0.48\textwidth}
        \includegraphics[width=\linewidth]{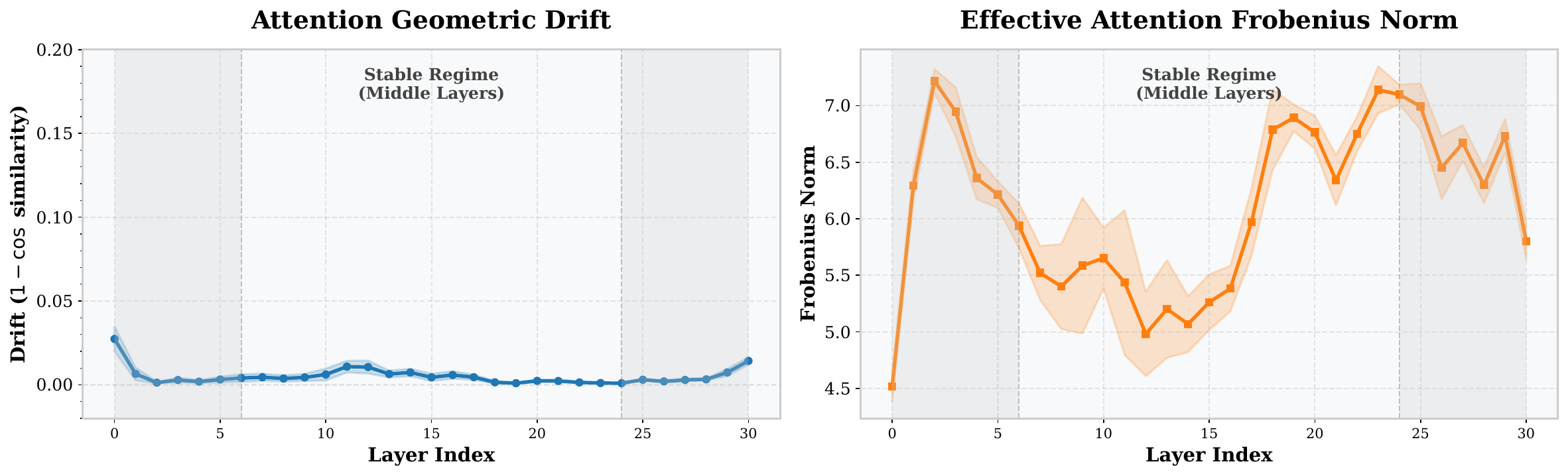}
        \caption{Llama-3-8B: Geometric Drift}
    \end{subfigure}
    \hfill
    \begin{subfigure}{0.48\textwidth}
        \includegraphics[width=\linewidth]{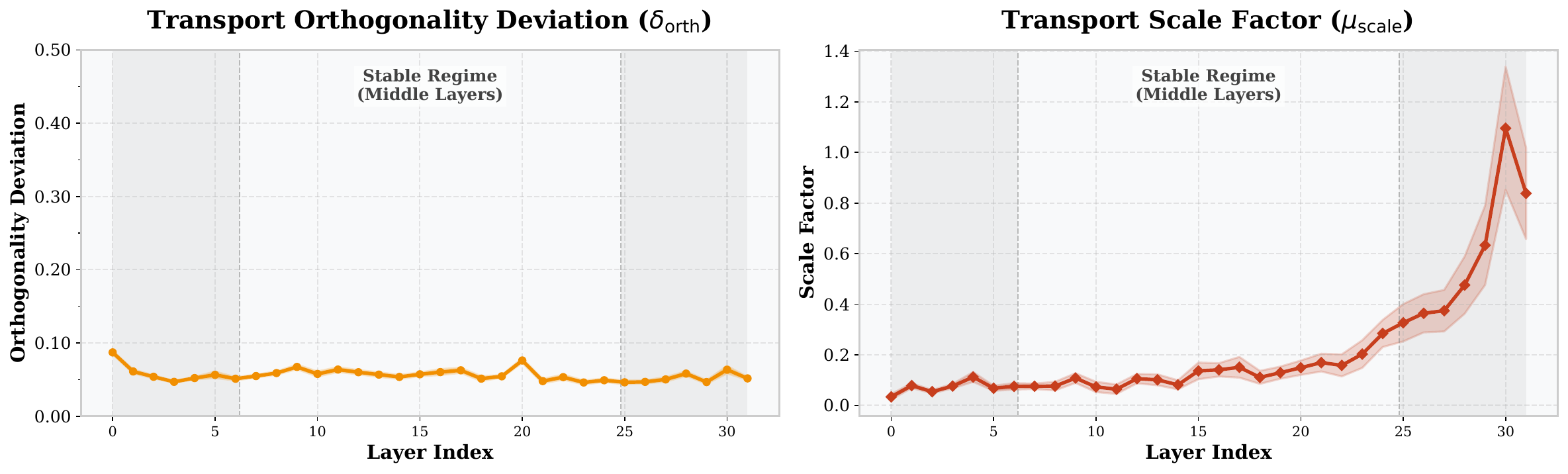}
        \caption{Llama-3-8B: Orthogonality}
    \end{subfigure}

    \caption{\textbf{Llama Series Evolution.} Llama models show low drift and low transport deviation under the proposed diagnostics.}
    \label{fig:app_llama_all}
\end{figure}

\subsection{Encoder-Only BERT}
\label{app:bert_encoder}

Figure~\ref{fig:bert_encoder_diagnostics} reports the same operator diagnostics for BERT-base-uncased.
Unless otherwise stated, the setup uses WikiText-2 test samples and a length-64 truncation.
For BERT, we remove \texttt{[CLS]}, \texttt{[SEP]}, and \texttt{[PAD]} positions before computing token-graph diagnostics, and we use the full valid bidirectional mask rather than a causal lower-triangular support.
\textbf{Observation:} BERT shows high adjacent-layer similarity overall, but its drift profile is less cleanly stabilized than the decoder-only GPT-2 probe. Its transport orthogonality deviation decreases in later layers, suggesting that the scaled-isometry diagnostic is not limited to decoder-only self-attention while remaining architecture-dependent.

\begin{figure}[H]
\centering
    \begin{subfigure}[t]{0.49\linewidth}
        \centering
        \includegraphics[width=\linewidth]{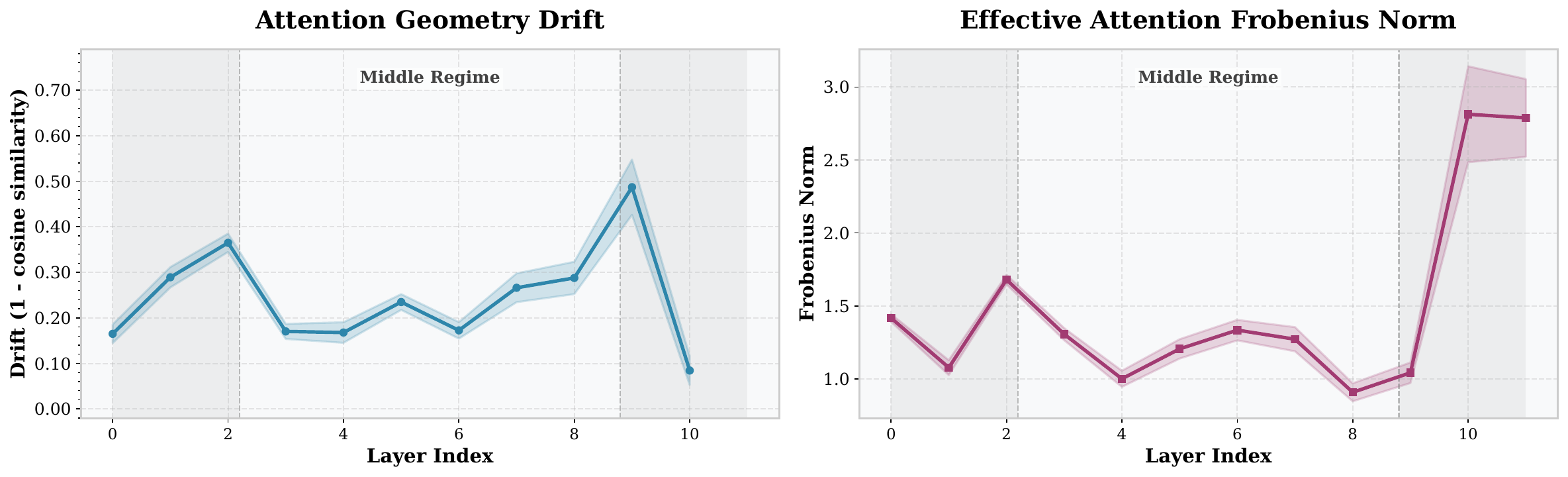}
        \caption{Effective-walk drift and $\|\Aeff\|_F$.}
        \label{fig:bert_aeff}
    \end{subfigure}
    \hfill
    \begin{subfigure}[t]{0.49\linewidth}
        \centering
        \includegraphics[width=\linewidth]{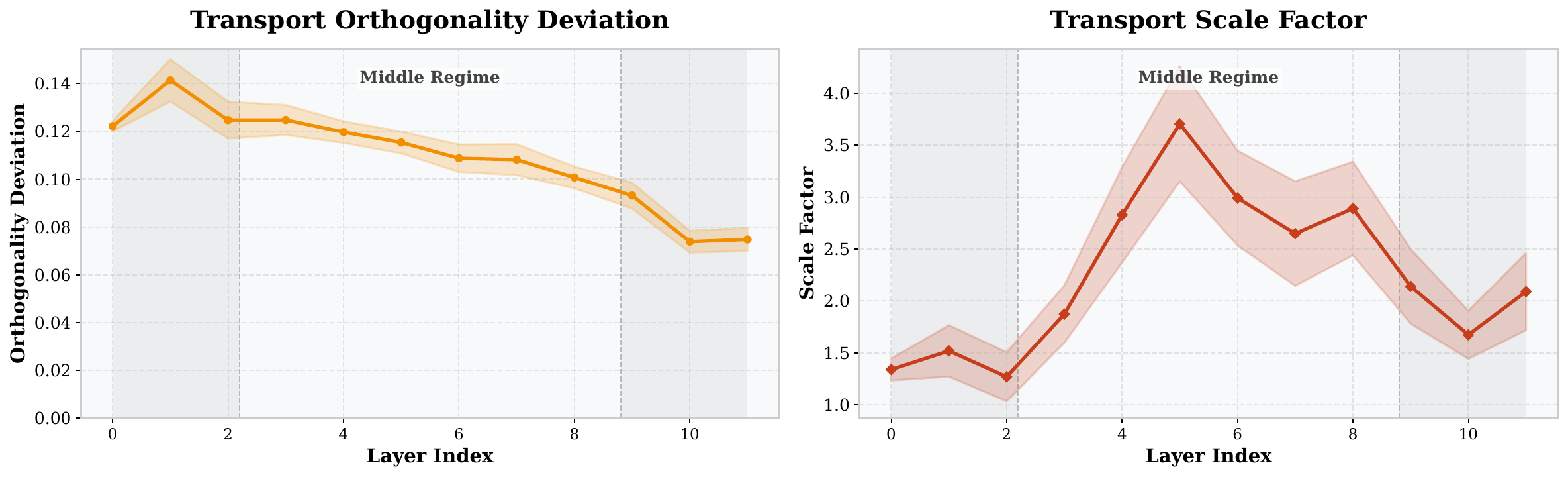}
        \caption{Transport orthogonality and scale.}
        \label{fig:bert_orthogonality}
    \end{subfigure}
\caption{BERT-base-uncased operator diagnostics using the full valid bidirectional non-special-token mask.}
\label{fig:bert_encoder_diagnostics}
\end{figure}

\clearpage

\section{Statistical Analysis of Transport Matrices}
\label{app:orth_stats}

\subsection{Diagonal Dominance}

We quantitatively assess the ``scaled isometry'' hypothesis by examining the structure of the local Gram matrix $G_{ij}^\ell = {O_{ij}^{\mathrm{eff},\ell}}^\top O_{ij}^{\mathrm{eff},\ell}$. We report the mean absolute magnitude of diagonal versus off-diagonal elements across three depth regimes: \textsc{Initial} (0--20\%), \textsc{Middle} (20--80\%), and \textsc{Final} (80--100\%).

As detailed in Table~\ref{tab:diag_off_diag_full}, the transport matrices exhibit consistent \textbf{diagonal dominance} across all model families. Focusing on the stable ``Middle'' regime:
\begin{itemize}
    \item For \textbf{GPT-2 Small}, the diagonal mean ($133.8$) is approximately \textbf{15x} larger than the off-diagonal mean ($9.0$).
    \item For \textbf{Llama-3-8B}, the diagonal mean ($0.1412$) is over \textbf{20x} larger than the off-diagonal mean ($0.0068$).
\end{itemize}
Intermediate models, including the Qwen series and larger GPT-2 variants, consistently fall within this range, maintaining high diagonal-to-off-diagonal ratios (typically $>10\times$) as shown in Table~\ref{tab:diag_off_diag_full}.

This dominance indicates that the learned transport operator $O_{ij}^{\mathrm{eff}}$ is closer to a scaled or rotational map than to an arbitrary dense shear map under this aggregate diagnostic. These metrics corroborate the visual sparsity observed in Figure~\ref{fig:heatmaps}, while not ruling out non-isometric effects on individual edges.

\begin{table}[h]
\centering
\caption{Quantitative analysis of Transport Gram Matrix $G_{ij}$. We report the mean magnitude of diagonal vs. off-diagonal elements. The ``Middle'' regime (layers from 20\% to 80\% depth) highlights strong diagonal dominance (approx. 15x--20x ratio), consistent with approximate scaled-isometry under this diagnostic.}
\label{tab:diag_off_diag_full}
\begin{small}
\begin{sc}
\setlength{\tabcolsep}{5pt} 
\begin{tabular}{lcccccc}
\toprule
 & \multicolumn{2}{c}{Initial (20\%)} & \multicolumn{2}{c}{Middle (60\%)} & \multicolumn{2}{c}{Final (20\%)} \\
\cmidrule(lr){2-3} \cmidrule(lr){4-5} \cmidrule(lr){6-7}
Model & Diag & Off-Diag & Diag & Off-Diag & Diag & Off-Diag \\
\midrule
GPT-2 (124M)       & 131.98 & 6.49  & 133.81 & 9.00  & 699.14 & 19.10 \\
GPT-2 Medium (335M)& 81.15  & 4.20  & 155.00 & 10.64 & 496.23 & 15.47 \\
GPT-2 Large (774M) & 3.00   & 0.25  & 8.71   & 0.58  & 18.54  & 0.89  \\
GPT-2 XL (1.5B)   & 2.21   & 0.18  & 5.93   & 0.40  & 12.35  & 1.16  \\
\midrule
Qwen2.5-3B         & 1.59   & 0.10  & 2.53   & 0.16  & 3.66   & 0.24  \\
Qwen2.5-7B         & 1.10   & 0.06  & 1.76   & 0.09  & 7.51   & 0.46  \\
\midrule
Qwen3-4B           & 10.40  & 0.61  & 5.67   & 0.29  & 6.57   & 0.35  \\
Qwen3-8B           & 26.73  & 1.45  & 16.23  & 0.76  & 14.97  & 0.69  \\
\midrule
Llama-3.2-3B       & 0.34   & 0.01  & 0.40   & 0.02  & 1.69   & 0.08  \\
Llama-3-8B         & 0.08   & 0.004 & \textbf{0.14} & \textbf{0.007} & 0.61 & 0.03 \\
\bottomrule
\end{tabular}
\end{sc}
\end{small}
\end{table}

\clearpage

\section{Ablation Study: Sequence Length}
\label{app:ablation}

We investigate the robustness of geometric stability and orthogonality under varying context lengths.
The main text extends GPT-2 Small to $1{,}024$ tokens in Figure~\ref{fig:ablation_main}; Figure~\ref{fig:full_ablation} reports the broader model-family comparison for context lengths $L \in \{16, 32, 64, 96, 128, 256\}$.

\begin{figure}[H]
    \centering
    \begin{subfigure}[b]{0.48\linewidth}
        \centering
        \includegraphics[width=\linewidth]{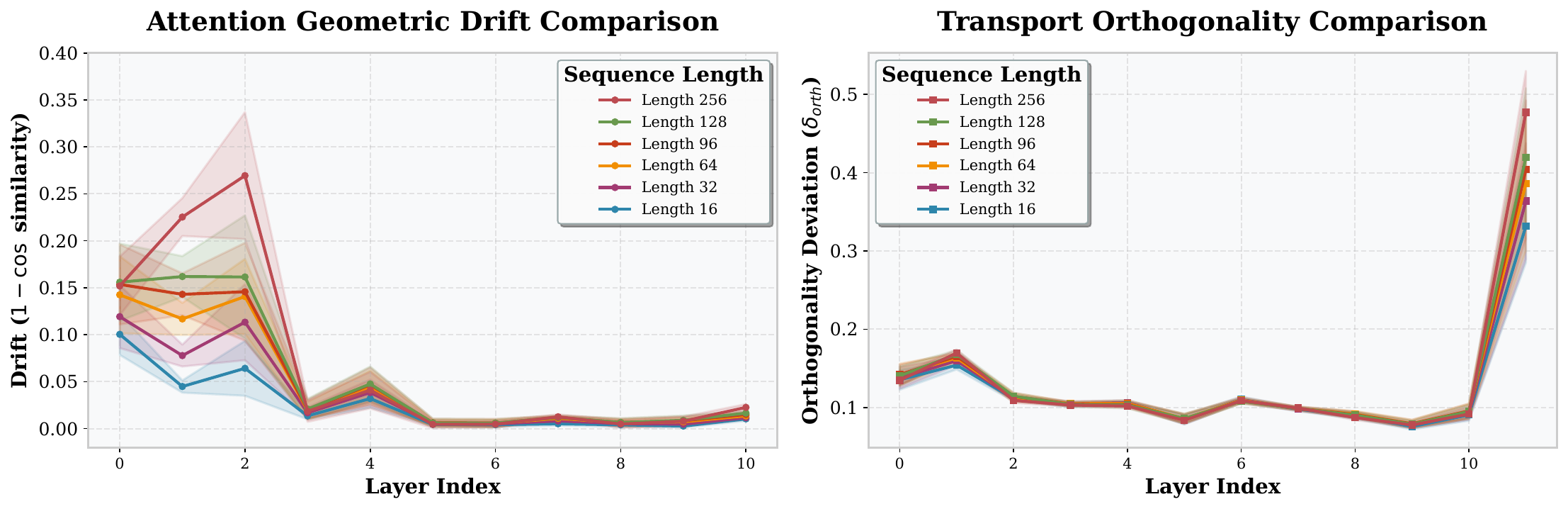} 
        \caption{GPT-2 Small}
    \end{subfigure}
    \hfill
    \begin{subfigure}[b]{0.48\linewidth}
        \centering
        \includegraphics[width=\linewidth]{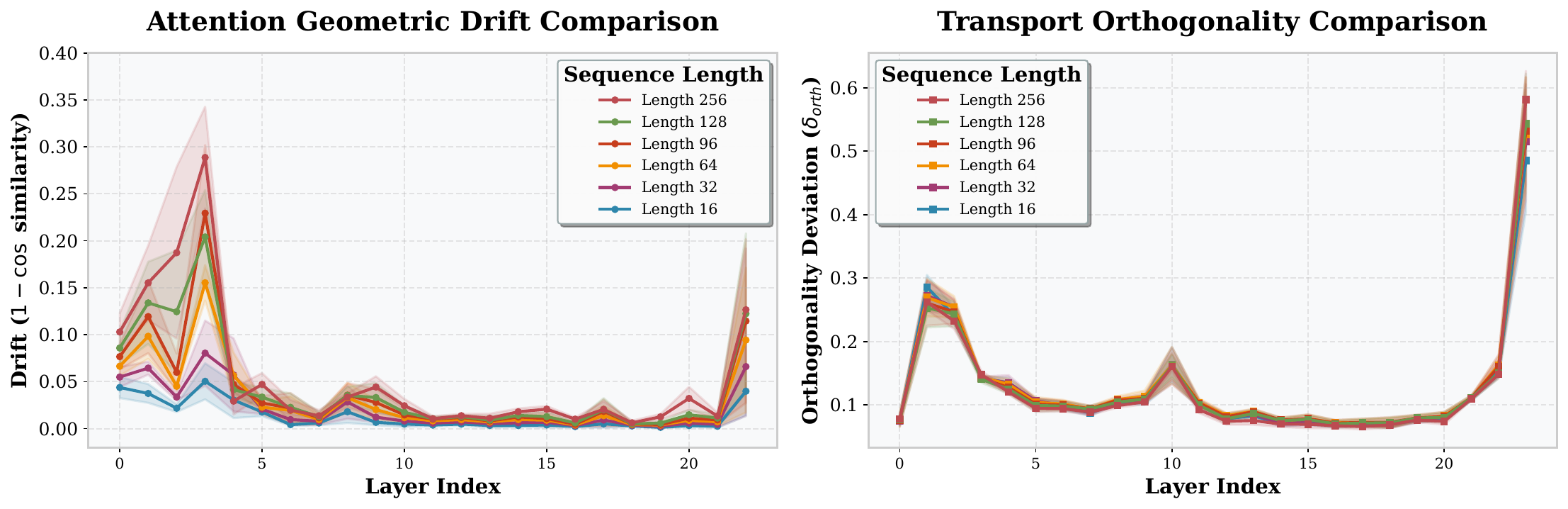}
        \caption{GPT-2 Medium}
    \end{subfigure}
    
    \vspace{0.05cm}

    \begin{subfigure}[b]{0.48\linewidth}
        \centering
        \includegraphics[width=\linewidth]{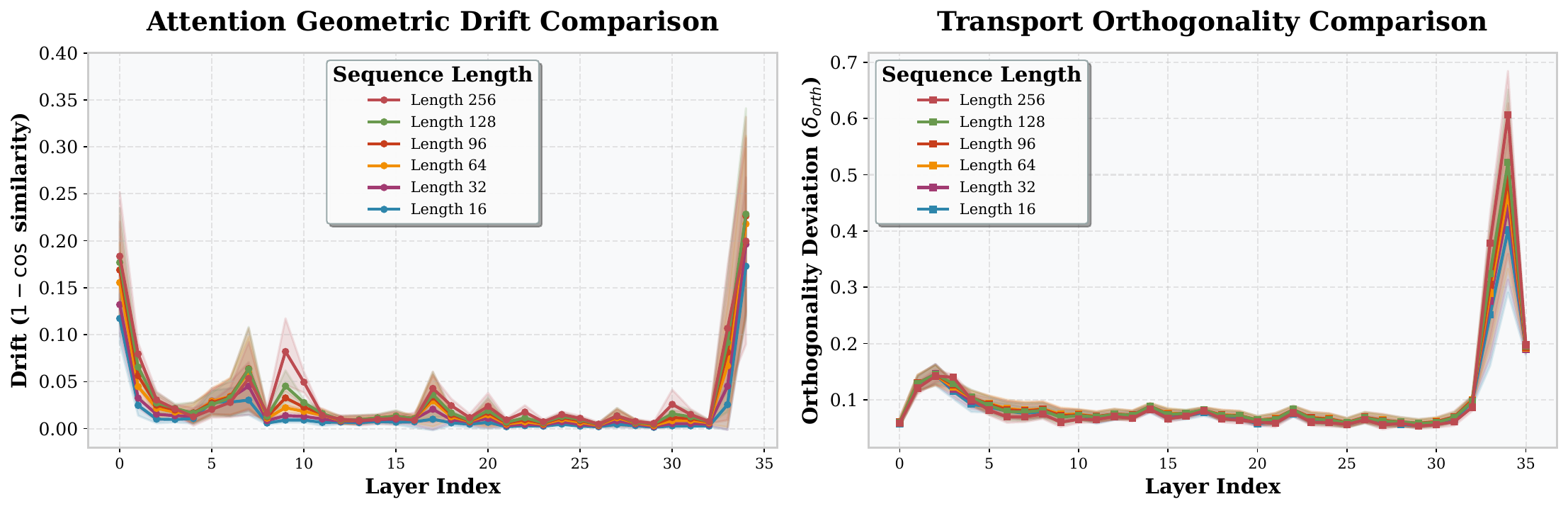} 
        \caption{GPT-2 Large}
    \end{subfigure}
    \hfill
    \begin{subfigure}[b]{0.48\linewidth}
        \centering
        \includegraphics[width=\linewidth]{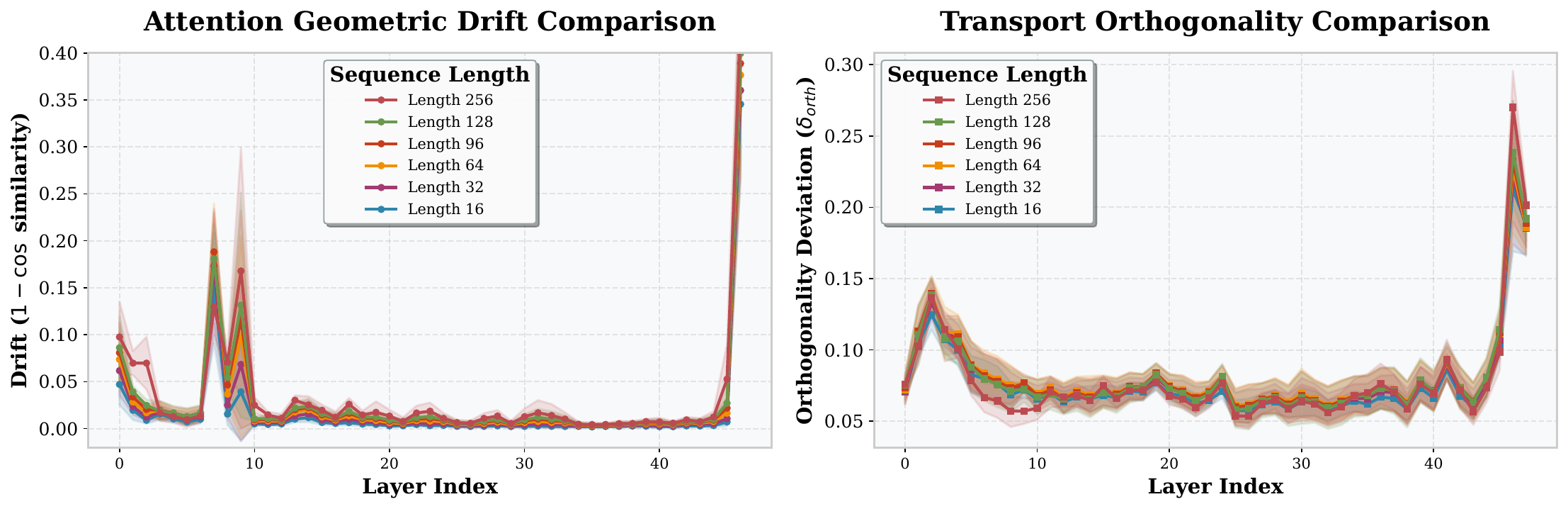}
        \caption{GPT-2 XL}
    \end{subfigure}

    \vspace{0.05cm}

    \begin{subfigure}[b]{0.48\linewidth}
        \centering
        \includegraphics[width=\linewidth]{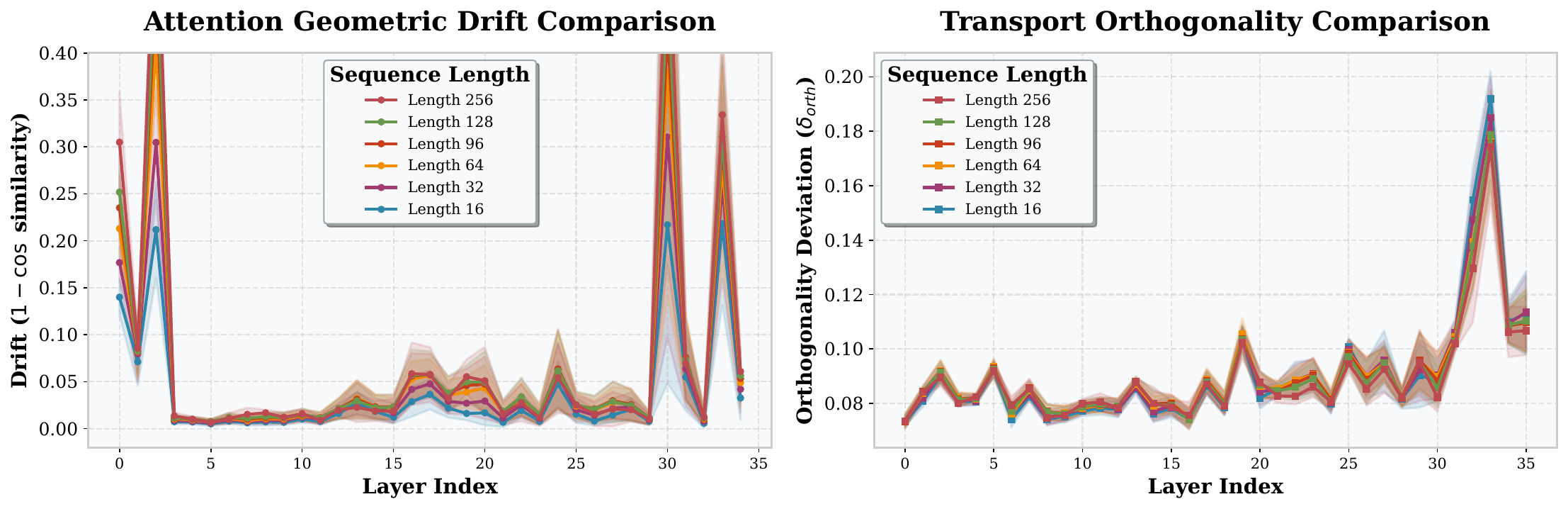}
        \caption{Qwen2.5-3B}
    \end{subfigure}
    \hfill
    \begin{subfigure}[b]{0.48\linewidth}
        \centering
        \includegraphics[width=\linewidth]{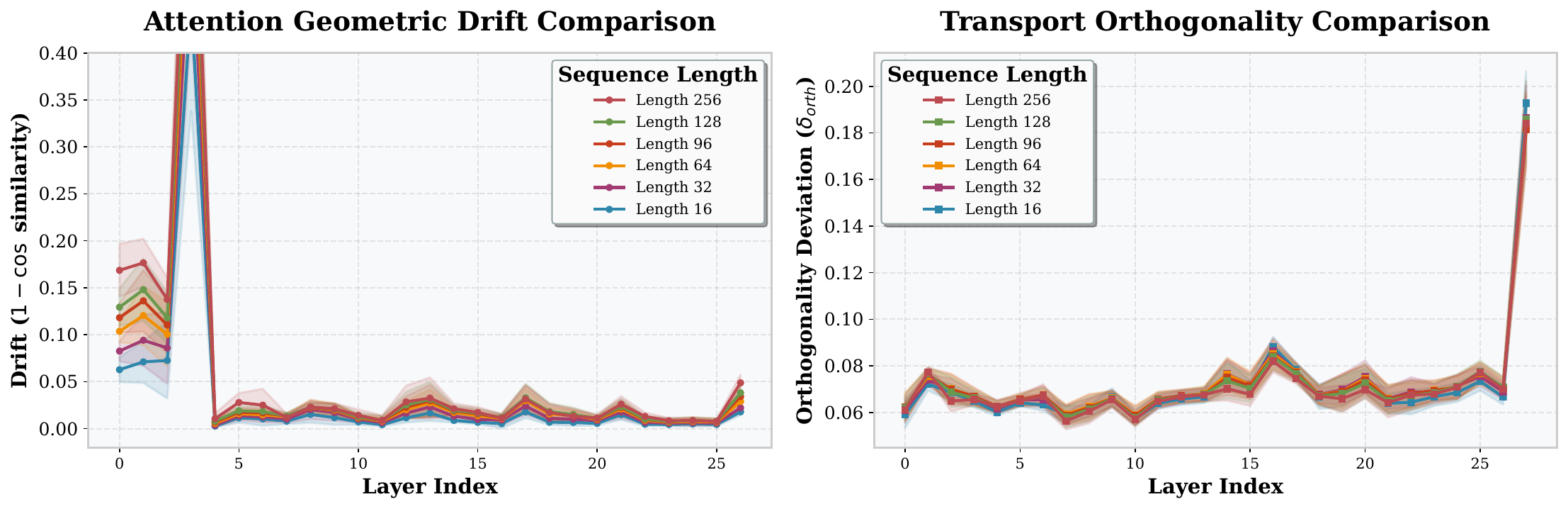}
        \caption{Qwen2.5-7B}
    \end{subfigure}
    
    \vspace{0.05cm}

    \begin{subfigure}[b]{0.48\linewidth}
        \centering
        \includegraphics[width=\linewidth]{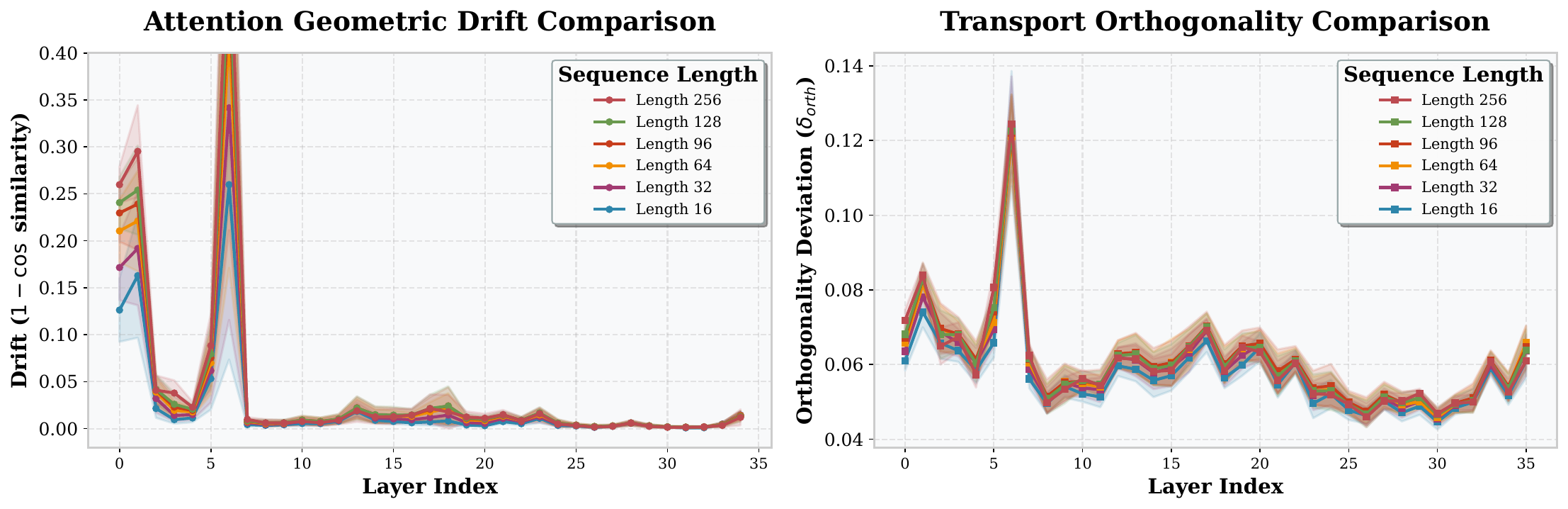}
        \caption{Qwen3-4B}
    \end{subfigure}
    \hfill
    \begin{subfigure}[b]{0.48\linewidth}
        \centering
        \includegraphics[width=\linewidth]{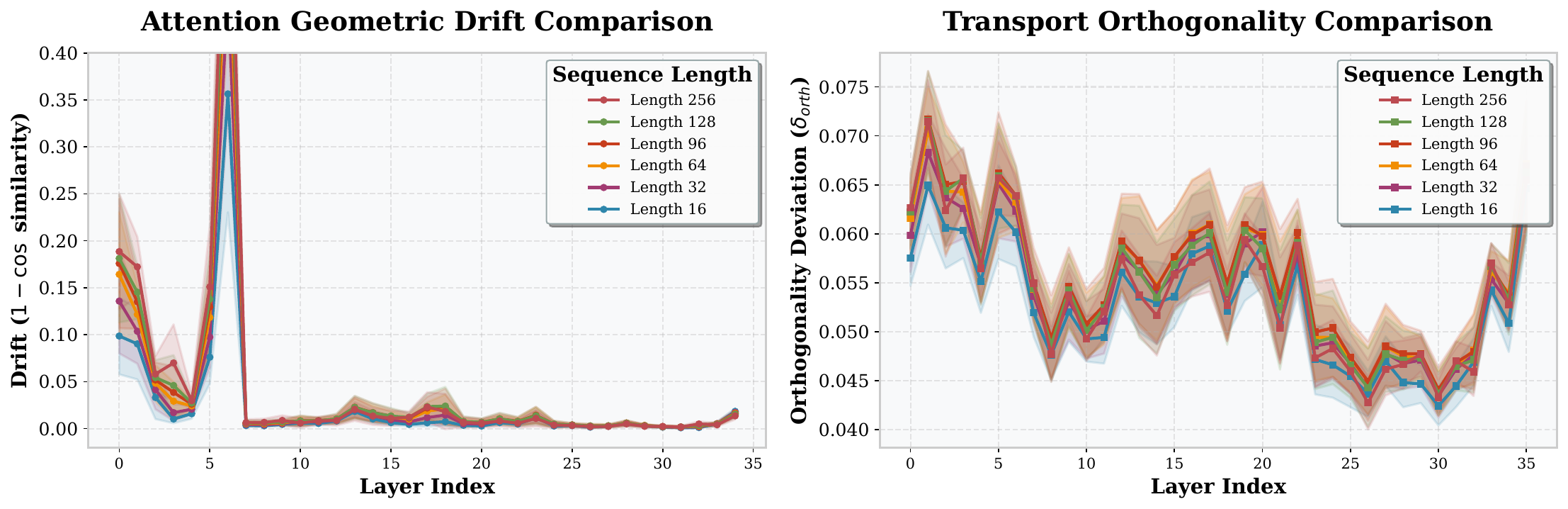}
        \caption{Qwen3-8B}
    \end{subfigure}
    
    \vspace{0.05cm}
    
    \begin{subfigure}[b]{0.48\linewidth}
        \centering
        \includegraphics[width=\linewidth]{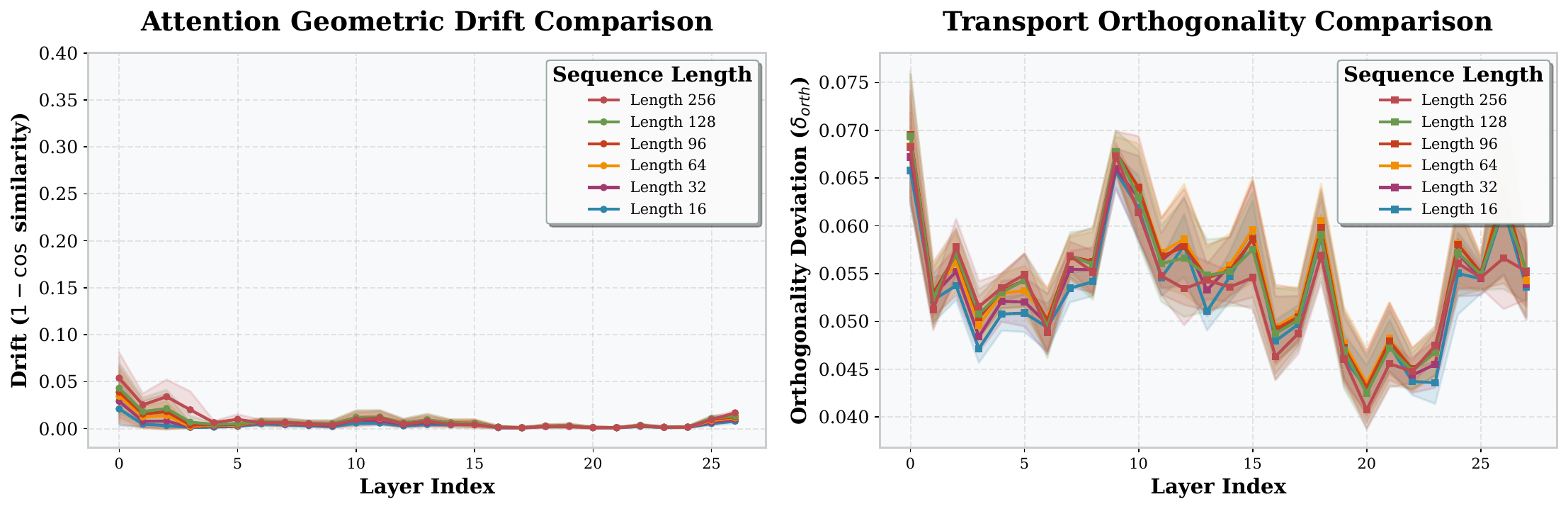}
        \caption{Llama-3.2-3B}
    \end{subfigure}
    \hfill
    \begin{subfigure}[b]{0.48\linewidth}
        \centering
        \includegraphics[width=\linewidth]{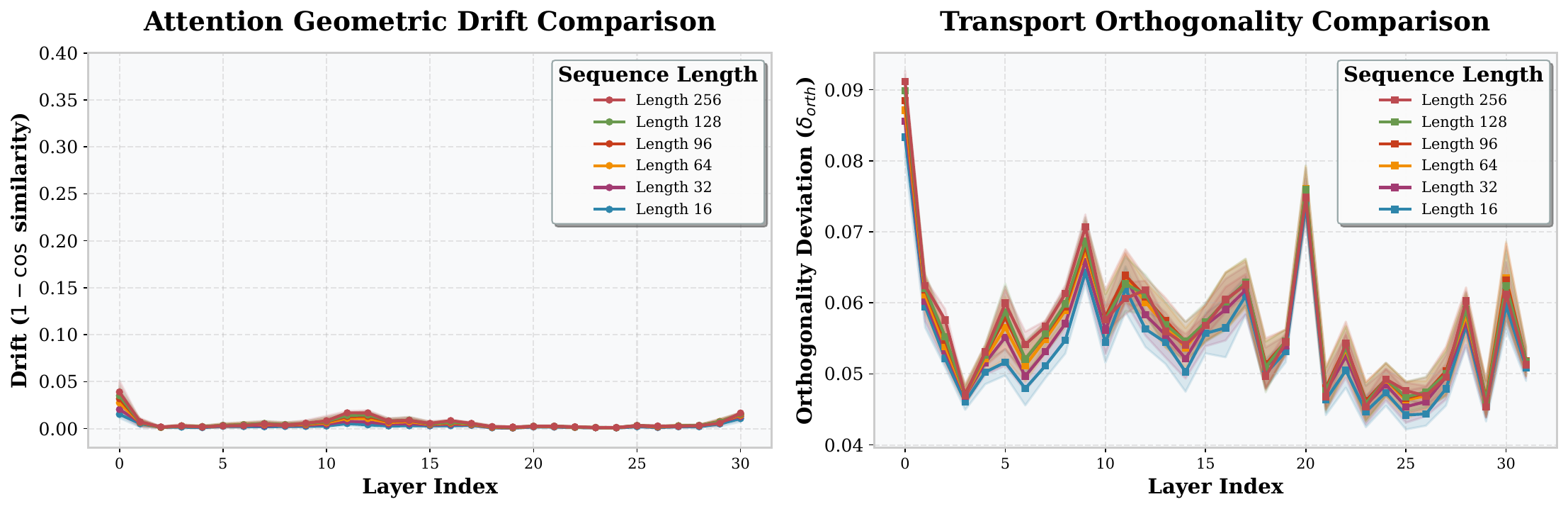}
        \caption{Llama-3-8B}
    \end{subfigure}

    \caption{\textbf{Impact of Sequence Length.} Comparing drift and orthogonality across model families for context lengths $L=16$ to $L=256$. Across the tested architectures, deeper layers often recover lower drift despite increased initial drift for longer sequences.}
    \label{fig:full_ablation}
\end{figure}

\clearpage

\section{LayerNorm and FFN Details for the Block-Level ADR View}
\label{app:block_dynamics}

This appendix expands the block-level discussion in Section~\ref{subsec:block-scope}.
The exact theorem-level statement of the paper concerns the attention sublayer, while residual connections, normalization, and FFNs determine how such attention sublayers are composed across depth.

For a pre-norm Transformer block, the schematic form is
\[
  U = X + \mathrm{Attn}(\LN(X)),
  \qquad
  Y = U + \FFN(\LN(U)).
\]
For a post-norm block, a schematic form is
\[
  U = \LN(X + \mathrm{Attn}(X)),
  \qquad
  Y = \LN(U + \FFN(U)).
\]
In both cases, attention is the component that introduces same-layer cross-token edges.
LayerNorm and FFN are tokenwise maps: their Jacobians are block diagonal over token positions, although they can be highly data-dependent inside each token fiber.

Under continuous-depth scaling, a pre-LN block is naturally modeled as a normalized-forcing evolution
\[
  \partial_t u = F_t(\LN(u)),
\]
where $F_t$ is the nonlocal connection-ADR vector field induced by attention and FFN.
A post-LN block is better modeled as a projected evolution
\[
  \partial_t u = J_{\LN}(u)F_t(u),
\]
where the infinitesimal update is projected through the normalization Jacobian.
This distinction is interpretive and does not change the exact attention-sublayer connection-walk theorem.

\clearpage

\section{Connection-Energy Diagnostic}
\label{app:connection_energy}

For a directed and generally non-isometric attention operator, the classical PSD Dirichlet-form interpretation need not hold.
Nevertheless, the following normalized nonnegative residual is useful as a diagnostic:
\begin{equation}
  E^{(\ell)}_{\mathrm{norm,off}}
  =
  \frac{\sum_{i\ne j}\Aeff^{(\ell)}(i,j)\,
  \left\|x_i^{(\ell)}-x_j^{(\ell)}\widehat O_{ij}^{(\ell)}\right\|_2^2}
  {\frac{1}{|V|}\sum_{k\in V}\|x_k^{(\ell)}\|_2^2+\epsilon},
  \qquad
  \widehat O_{ij}^{(\ell)}=\frac{O_{ij}^{\mathrm{eff},(\ell)}}{\sqrt{\mu_{\mathrm{scale}}^{(\ell)}(i,j)}}.
\end{equation}
The self-loop terms are removed so that the diagnostic focuses on cross-token transport, and the scale-normalized transport $\widehat O_{ij}$ removes the scalar factor from an approximately scaled-orthogonal map.

Figure~\ref{fig:connection_energy} reports the off-diagonal energy measurements for GPT-2 Small and BERT-base-uncased.
GPT-2 shows comparatively low normalized energy in its middle layers relative to boundary layers, while BERT shows a broad depthwise decrease.
The raw energy and off-diagonal mass panels help separate residual magnitude from attention-mass effects.

\begin{figure}[H]
\centering
\includegraphics[width=\linewidth]{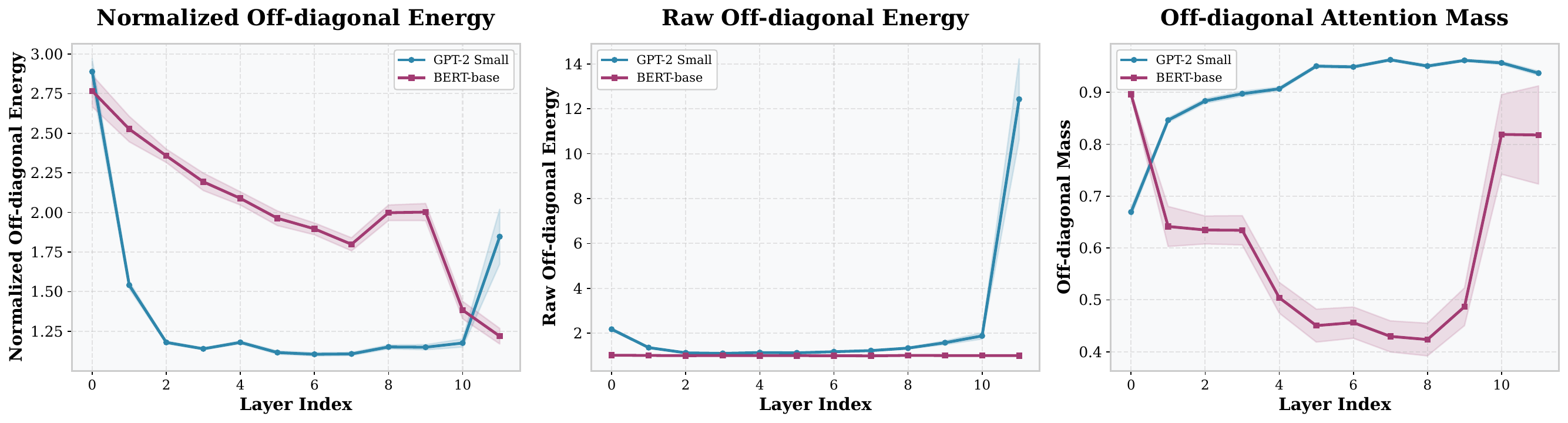}
\caption{Connection-energy diagnostics for GPT-2 Small and BERT-base-uncased. The panels report normalized off-diagonal energy, raw off-diagonal energy, and off-diagonal attention mass across layers.}
\label{fig:connection_energy}
\end{figure}

\clearpage

\section{Extended Context-Length Diagnostic}
\label{app:long_context}

The main-text ablation (Figure~\ref{fig:ablation_main}) probes context lengths up to $1{,}024$ tokens for GPT-2 Small.
Figure~\ref{fig:long_context} provides a supporting diagnostic across the same extended range.
The qualitative trend is stable across this range: longer contexts mildly increase early-layer drift and orthogonality deviation, but the deeper-layer low-drift, low-deviation profile remains visible within the tested regime.
These are descriptive observations on a single model and probe, not a causal claim about long-context competence.

\begin{figure}[H]
\centering
\includegraphics[width=\linewidth]{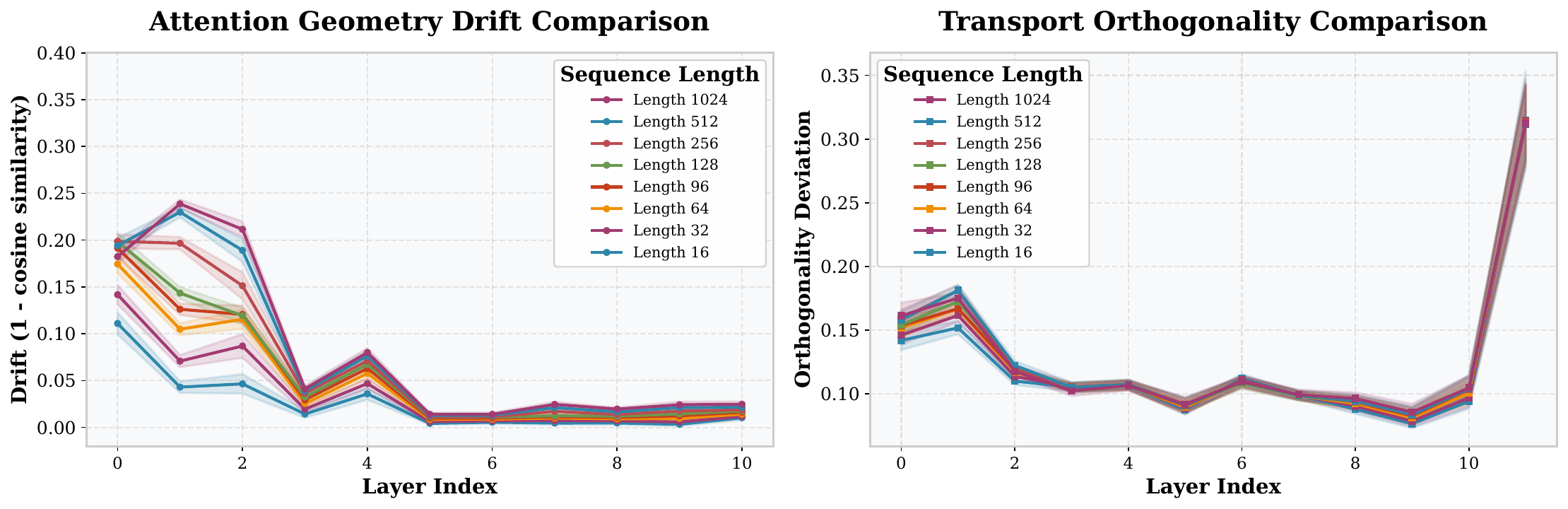}
\caption{Extended context-length diagnostic for GPT-2 Small, probing context lengths up to $1{,}024$ tokens. Longer contexts mildly increase early-layer drift and orthogonality deviation; the deeper-layer stable regime remains visible in the tested range.}
\label{fig:long_context}
\end{figure}

\clearpage

\section{Cross-Attention and Encoder--Decoder Extension}
\label{app:cross_attention}

The connection-walk formalism also extends to cross-attention by replacing the single token graph with a directed bipartite or block-structured graph.
Encoder tokens are source nodes and decoder tokens are target nodes.
For a single head with encoder states $X_E$ and decoder queries, the cross-attention update has the form
\[
  Y_D = A_{D\leftarrow E}X_E M,
\]
where $A_{D\leftarrow E}$ is row-stochastic over encoder sources for each decoder target.
Thus
\[
  (Y_D)_i = \sum_j A_{D\leftarrow E}(i,j)(X_E)_jO_{ij}.
\]
The multi-head derivation follows the same blockwise averaging argument as Theorem~\ref{thm:mha}, with an effective cross-walk $A^{\mathrm{cross}}_{\mathrm{eff}}$ and cross-edge transports $O^{\mathrm{cross,eff}}_{ij}$.
In a combined encoder-decoder block operator, cross-attention occupies an off-diagonal block from encoder-source fibers to decoder-target fibers.
A systematic empirical study of this setting remains future work.

\clearpage
\bibliography{content/references}

@inproceedings{vaswani2017attention,
  author       = {Ashish Vaswani and
                  Noam Shazeer and
                  Niki Parmar and
                  Jakob Uszkoreit and
                  Llion Jones and
                  Aidan N. Gomez and
                  Lukasz Kaiser and
                  Illia Polosukhin},
  editor       = {Isabelle Guyon and
                  Ulrike von Luxburg and
                  Samy Bengio and
                  Hanna M. Wallach and
                  Rob Fergus and
                  S. V. N. Vishwanathan and
                  Roman Garnett},
  title        = {Attention is All you Need},
  booktitle    = {Advances in Neural Information Processing Systems 30: Annual Conference
                  on Neural Information Processing Systems 2017, December 4-9, 2017,
                  Long Beach, CA, {USA}},
  pages        = {5998--6008},
  year         = {2017},
  bibsource    = {dblp computer science bibliography, https://dblp.org}
}

@inproceedings{choromanski2020performer,
  author       = {Krzysztof Marcin Choromanski and
                  Valerii Likhosherstov and
                  David Dohan and
                  Xingyou Song and
                  Andreea Gane and
                  Tam{\'{a}}s Sarl{\'{o}}s and
                  Peter Hawkins and
                  Jared Quincy Davis and
                  Afroz Mohiuddin and
                  Lukasz Kaiser and
                  David Benjamin Belanger and
                  Lucy J. Colwell and
                  Adrian Weller},
  title        = {Rethinking Attention with Performers},
  booktitle    = {9th International Conference on Learning Representations, {ICLR} 2021,
                  Virtual Event, Austria, May 3-7, 2021},
  publisher    = {OpenReview.net},
  year         = {2021},
  bibsource    = {dblp computer science bibliography, https://dblp.org}
}

@article{tay2023efficient,
  author       = {Yi Tay and
                  Mostafa Dehghani and
                  Dara Bahri and
                  Donald Metzler},
  title        = {Efficient Transformers: {A} Survey},
  journal      = {{ACM} Comput. Surv.},
  volume       = {55},
  number       = {6},
  pages        = {109:1--109:28},
  year         = {2023},
  url          = {https://doi.org/10.1145/3530811},
  doi          = {10.1145/3530811},
  bibsource    = {dblp computer science bibliography, https://dblp.org}
}

@article{wright2021nonmercer,
  author       = {Matthew A. Wright and
                  Joseph E. Gonzalez},
  title        = {Transformers are Deep Infinite-Dimensional Non-Mercer Binary Kernel
                  Machines},
  journal      = {CoRR},
  volume       = {abs/2106.01506},
  year         = {2021},
  url          = {https://arxiv.org/abs/2106.01506},
  eprinttype    = {arXiv},
  eprint       = {2106.01506},
  bibsource    = {dblp computer science bibliography, https://dblp.org}
}

@article{chowdhury2022transformerkernel,
  author       = {Sankalan Pal Chowdhury and
                  Adamos Solomou and
                  Avinava Dubey and
                  Mrinmaya Sachan},
  title        = {On Learning the Transformer Kernel},
  journal      = {CoRR},
  volume       = {abs/2110.08323},
  year         = {2021},
  url          = {https://arxiv.org/abs/2110.08323},
  eprinttype    = {arXiv},
  eprint       = {2110.08323},
  bibsource    = {dblp computer science bibliography, https://dblp.org}
}

@inproceedings{wu2024maskln,
  author       = {Xinyi Wu and
                  Amir Ajorlou and
                  Yifei Wang and
                  Stefanie Jegelka and
                  Ali Jadbabaie},
  editor       = {Amir Globersons and
                  Lester Mackey and
                  Danielle Belgrave and
                  Angela Fan and
                  Ulrich Paquet and
                  Jakub M. Tomczak and
                  Cheng Zhang},
  title        = {On the Role of Attention Masks and LayerNorm in Transformers},
  booktitle    = {Advances in Neural Information Processing Systems 38: Annual Conference
                  on Neural Information Processing Systems 2024, NeurIPS 2024, Vancouver,
                  BC, Canada, December 10 - 15, 2024},
  year         = {2024},
  bibsource    = {dblp computer science bibliography, https://dblp.org}
}

@inproceedings{ramsauer2020hopfield,
  author       = {Hubert Ramsauer and
                  Bernhard Sch{\"{a}}fl and
                  Johannes Lehner and
                  Philipp Seidl and
                  Michael Widrich and
                  Lukas Gruber and
                  Markus Holzleitner and
                  Thomas Adler and
                  David P. Kreil and
                  Michael K. Kopp and
                  G{\"{u}}nter Klambauer and
                  Johannes Brandstetter and
                  Sepp Hochreiter},
  title        = {Hopfield Networks is All You Need},
  booktitle    = {9th International Conference on Learning Representations, {ICLR} 2021,
                  Virtual Event, Austria, May 3-7, 2021},
  publisher    = {OpenReview.net},
  year         = {2021},
  bibsource    = {dblp computer science bibliography, https://dblp.org}
}

@inproceedings{yun2019gtn,
  author       = {Seongjun Yun and
                  Minbyul Jeong and
                  Raehyun Kim and
                  Jaewoo Kang and
                  Hyunwoo J. Kim},
  editor       = {Hanna M. Wallach and
                  Hugo Larochelle and
                  Alina Beygelzimer and
                  Florence d'Alch{\'{e}}{-}Buc and
                  Emily B. Fox and
                  Roman Garnett},
  title        = {Graph Transformer Networks},
  booktitle    = {Advances in Neural Information Processing Systems 32: Annual Conference
                  on Neural Information Processing Systems 2019, NeurIPS 2019, December
                  8-14, 2019, Vancouver, BC, Canada},
  pages        = {11960--11970},
  year         = {2019},
  bibsource    = {dblp computer science bibliography, https://dblp.org}
}

@article{singer2012vdm,
  title     = {Vector diffusion maps and the connection Laplacian},
  author    = {Singer, Amit and Wu, H-T},
  journal   = {Communications on pure and applied mathematics},
  volume    = {65},
  number    = {8},
  pages     = {1067--1144},
  year      = {2012},
  publisher = {Wiley Online Library}
}

@article{bandeira2013cheeger,
  author       = {Afonso S. Bandeira and
                  Amit Singer and
                  Daniel A. Spielman},
  title        = {A Cheeger Inequality for the Graph Connection Laplacian},
  journal      = {{SIAM} J. Matrix Anal. Appl.},
  volume       = {34},
  number       = {4},
  pages        = {1611--1630},
  year         = {2013},
  url          = {https://doi.org/10.1137/120875338},
  doi          = {10.1137/120875338},
  bibsource    = {dblp computer science bibliography, https://dblp.org}
}

@inproceedings{cohen2019gauge,
  author       = {Taco Cohen and
                  Maurice Weiler and
                  Berkay Kicanaoglu and
                  Max Welling},
  editor       = {Kamalika Chaudhuri and
                  Ruslan Salakhutdinov},
  title        = {Gauge Equivariant Convolutional Networks and the Icosahedral {CNN}},
  booktitle    = {Proceedings of the 36th International Conference on Machine Learning,
                  {ICML} 2019, 9-15 June 2019, Long Beach, California, {USA}},
  series       = {Proceedings of Machine Learning Research},
  volume       = {97},
  pages        = {1321--1330},
  publisher    = {{PMLR}},
  year         = {2019},
  bibsource    = {dblp computer science bibliography, https://dblp.org}
}

@article{bronstein2021gdl,
  author       = {Michael M. Bronstein and
                  Joan Bruna and
                  Taco Cohen and
                  Petar Velickovic},
  title        = {Geometric Deep Learning: Grids, Groups, Graphs, Geodesics, and Gauges},
  journal      = {CoRR},
  volume       = {abs/2104.13478},
  year         = {2021},
  url          = {https://arxiv.org/abs/2104.13478},
  eprinttype    = {arXiv},
  eprint       = {2104.13478},
  bibsource    = {dblp computer science bibliography, https://dblp.org}
}

@inproceedings{kreuzer2021rethinking,
  author       = {Devin Kreuzer and
                  Dominique Beaini and
                  William L. Hamilton and
                  Vincent L{\'{e}}tourneau and
                  Prudencio Tossou},
  editor       = {Marc'Aurelio Ranzato and
                  Alina Beygelzimer and
                  Yann N. Dauphin and
                  Percy Liang and
                  Jennifer Wortman Vaughan},
  title        = {Rethinking Graph Transformers with Spectral Attention},
  booktitle    = {Advances in Neural Information Processing Systems 34: Annual Conference
                  on Neural Information Processing Systems 2021, NeurIPS 2021, December
                  6-14, 2021, virtual},
  pages        = {21618--21629},
  year         = {2021},
  bibsource    = {dblp computer science bibliography, https://dblp.org}
}

@article{farooq2025nonlinearhopfield,
  author       = {Ahmed Farooq},
  title        = {A Framework for Non-Linear Attention via Modern Hopfield Networks},
  journal      = {CoRR},
  volume       = {abs/2506.11043},
  year         = {2025},
  url          = {https://doi.org/10.48550/arXiv.2506.11043},
  doi          = {10.48550/ARXIV.2506.11043},
  eprinttype    = {arXiv},
  eprint       = {2506.11043},
  bibsource    = {dblp computer science bibliography, https://dblp.org}
}

@inproceedings{babiloni2020tesa,
  author       = {Francesca Babiloni and
                  Ioannis Marras and
                  Gregory G. Slabaugh and
                  Stefanos Zafeiriou},
  title        = {{TESA:} Tensor Element Self-Attention via Matricization},
  booktitle    = {2020 {IEEE/CVF} Conference on Computer Vision and Pattern Recognition,
                  {CVPR} 2020, Seattle, WA, USA, June 13-19, 2020},
  pages        = {13942--13951},
  publisher    = {Computer Vision Foundation / {IEEE}},
  year         = {2020},
  doi          = {10.1109/CVPR42600.2020.01396},
  bibsource    = {dblp computer science bibliography, https://dblp.org}
}

@inproceedings{ma2019tensorized,
  author       = {Xindian Ma and
                  Peng Zhang and
                  Shuai Zhang and
                  Nan Duan and
                  Yuexian Hou and
                  Ming Zhou and
                  Dawei Song},
  editor       = {Hanna M. Wallach and
                  Hugo Larochelle and
                  Alina Beygelzimer and
                  Florence d'Alch{\'{e}}{-}Buc and
                  Emily B. Fox and
                  Roman Garnett},
  title        = {A Tensorized Transformer for Language Modeling},
  booktitle    = {Advances in Neural Information Processing Systems 32: Annual Conference
                  on Neural Information Processing Systems 2019, NeurIPS 2019, December
                  8-14, 2019, Vancouver, BC, Canada},
  pages        = {2229--2239},
  year         = {2019},
  bibsource    = {dblp computer science bibliography, https://dblp.org}
}

@article{tpa2025,
  author       = {Yifan Zhang and
                  Yifeng Liu and
                  Huizhuo Yuan and
                  Zhen Qin and
                  Yang Yuan and
                  Quanquan Gu and
                  Andrew Chi{-}Chih Yao},
  title        = {Tensor Product Attention Is All You Need},
  journal      = {CoRR},
  volume       = {abs/2501.06425},
  year         = {2025},
  url          = {https://doi.org/10.48550/arXiv.2501.06425},
  doi          = {10.48550/ARXIV.2501.06425},
  eprinttype    = {arXiv},
  eprint       = {2501.06425},
  bibsource    = {dblp computer science bibliography, https://dblp.org}
}

@article{chung2005directed,
  title     = {Laplacians and the Cheeger inequality for directed graphs},
  author    = {Chung, Fan},
  journal   = {Annals of Combinatorics},
  volume    = {9},
  number    = {1},
  pages     = {1--19},
  year      = {2005},
  publisher = {Springer}
}

@article{veerman2020primer,
  title   = {A Primer on Laplacian Dynamics in Directed Graphs},
  author  = {Veerman, J. J. P. and Lyons, Robert},
  journal = {Nonlinear Phenomena in Complex Systems},
  volume  = {23},
  number  = {2},
  pages   = {196--206},
  year    = {2020},
  doi     = {10.33581/1561-4085-2020-23-2-196-206}
}

@article{lin2013parallel,
  author       = {Binbin Lin and
                  Xiaofei He and
                  Chiyuan Zhang and
                  Ming Ji},
  title        = {Parallel vector field embedding},
  journal      = {J. Mach. Learn. Res.},
  volume       = {14},
  number       = {1},
  pages        = {2945--2977},
  year         = {2013},
  url          = {https://dl.acm.org/doi/10.5555/2567709.2567755},
  doi          = {10.5555/2567709.2567755},
  bibsource    = {dblp computer science bibliography, https://dblp.org}
}

@inproceedings{tong2025neuralode,
  author       = {Anh Tong and
                  Thanh Nguyen{-}Tang and
                  Dongeun Lee and
                  Duc Nguyen and
                  Toan M. Tran and
                  David Leo Wright Hall and
                  Cheongwoong Kang and
                  Jaesik Choi},
  title        = {Neural {ODE} Transformers: Analyzing Internal Dynamics and Adaptive
                  Fine-tuning},
  booktitle    = {The Thirteenth International Conference on Learning Representations,
                  {ICLR} 2025, Singapore, April 24-28, 2025},
  publisher    = {OpenReview.net},
  year         = {2025},
  bibsource    = {dblp computer science bibliography, https://dblp.org}
}

@article{kan2025ottransformer,
  author       = {Kelvin Kan and
                  Xingjian Li and
                  Stanley J. Osher},
  title        = {OT-Transformer: {A} Continuous-time Transformer Architecture with
                  Optimal Transport Regularization},
  journal      = {CoRR},
  volume       = {abs/2501.18793},
  year         = {2025},
  url          = {https://doi.org/10.48550/arXiv.2501.18793},
  doi          = {10.48550/ARXIV.2501.18793},
  eprinttype    = {arXiv},
  eprint       = {2501.18793},
  bibsource    = {dblp computer science bibliography, https://dblp.org}
}

@inproceedings{bodnar2022neuralsheafdiffusion,
  author       = {Cristian Bodnar and
                  Francesco Di Giovanni and
                  Benjamin Paul Chamberlain and
                  Pietro Li{\`o} and
                  Michael M. Bronstein},
  title        = {Neural Sheaf Diffusion: {A} Topological Perspective on Heterophily
                  and Oversmoothing in GNNs},
  booktitle    = {Advances in Neural Information Processing Systems 35: Annual Conference
                  on Neural Information Processing Systems 2022, NeurIPS 2022, New Orleans,
                  LA, USA, November 28 - December 9, 2022},
  pages        = {18527--18541},
  year         = {2022},
  url          = {https://openreview.net/forum?id=vbPsD-BhOZ},
  bibsource    = {dblp computer science bibliography, https://dblp.org}
}

@inproceedings{barbero2022connectionlaplacians,
  author       = {Federico Barbero and
                  Cristian Bodnar and
                  Haitz S{\'a}ez de Oc{\'a}riz Borde and
                  Michael M. Bronstein and
                  Petar Velickovic and
                  Pietro Li{\`o}},
  title        = {Sheaf Neural Networks with Connection Laplacians},
  booktitle    = {Proceedings of Topological, Algebraic, and Geometric Learning Workshops 2022},
  series       = {Proceedings of Machine Learning Research},
  volume       = {196},
  pages        = {28--36},
  publisher    = {PMLR},
  year         = {2022},
  url          = {https://proceedings.mlr.press/v196/barbero22a.html},
  bibsource    = {dblp computer science bibliography, https://dblp.org}
}

\end{document}